\def\eqref#1{equation~\ref{#1}}
\def\1{\bm{1}}
\DeclareMathAlphabet{\mathsfit}{\encodingdefault}{\sfdefault}{m}{sl}
\SetMathAlphabet{\mathsfit}{bold}{\encodingdefault}{\sfdefault}{bx}{n}
\newcommand{\cmark}{\ding{51}}%
\newcommand{\xmark}{\ding{55}}%
\newtheorem{definition}{Definition}
\newtheorem{proposition}{Proposition}
\title{Improving Generalization and Stability of Generative Adversarial Networks}
\author{Hoang Thanh-Tung \\
\texttt{hoangtha@deakin.edu.au} \And
Truyen Tran \\
\texttt{truyen.tran@deakin.edu.au} \And
Svetha Venkatesh \\
\texttt{svetha.venkatesh@deakin.edu.au}
}
\begin{document}

\maketitle

\begin{abstract}
Generative Adversarial Networks (GANs) are one of the most popular tools for learning complex high dimensional distributions. However, generalization properties of GANs have not been well understood. In this paper, we analyze the generalization of GANs in practical settings. We show that discriminators trained on discrete datasets with the original GAN loss have poor generalization capability and do not approximate the theoretically optimal discriminator. We propose a zero-centered gradient penalty for improving the generalization of the discriminator by pushing it toward the optimal discriminator. The penalty guarantees the generalization and convergence of GANs. Experiments on synthetic and large scale datasets verify our theoretical analysis.
\end{abstract}

\section{Introduction}
GANs \citep{gan} are one of the most popular tools for modeling high dimensional data. The original GAN is, however, highly unstable and often suffers from mode collapse. Much of recent researches has focused on improving the stability of GANs  \citep{dcgan, wgan, ttur, spectralNorm, progressiveGAN}. On the theoretical aspect, \cite{ganStable} proved that gradient based training of the original GAN is locally stable. \cite{ttur} further proved that GANs trained with Two Timescale Update Rule (TTUR) converge to local equilibria. However, the generalization of GANs at local equilibria is not discussed in depth in these papers.

\cite{equiAndGeneralization} showed that the generator can win by remembering a polynomial number of training examples. The result implies that a low capacity discriminator cannot detect the lack of diversity. Therefore, it cannot teach the generator to approximate the target distribution. In section \ref{generalization}, we discuss the generalization capability of high capacity discriminators. We show that high capacity discriminators trained with the original GAN loss tends to overfit to the mislabeled samples in training dataset, guiding the generator toward collapsed equilibria (i.e. equilibria where the generator has mode collapse). 

\cite{doGANLearnDist} proposed to measure the generalization capability of GAN by estimating the number of modes in the model distribution using the birthday paradox. Experiments on several datasets showed that the number of modes in the model distribution is several times greater than the number of training examples. The author concluded that although GANs might not be able to learn distributions, they do exhibit some level of generalization. Our analysis shows that poor generalization comes from the mismatch between discriminators trained on discrete finite datasets and the theoretically optimal discriminator. We propose a zero-centered gradient penalty for improving the generalization capability of (high capacity) discriminators. Our zero-centered gradient penalty pushes the discriminator toward the optimal one, making GAN to converge to equilibrium with good generalization capability. 



Our contributions are as follow:
\begin{enumerate}
\item We show that discriminators trained with the original GAN loss have poor generalization capability. Poor generalization in the discriminator prevents the generator from learning the target distribution. 
\item We show that the original GAN objective encourages gradient exploding in the discriminator. Gradient exploding in the discriminator can lead to mode collapse in the generator. 
\item We propose a zero-centered gradient penalty (0-GP) for improving the generalization capability of the discriminator. We show that non-zero centered GP and the zero-centered GP proposed in \cite{whichGANConverge} cannot make the discriminator generalize. Our 0-GP helps GANs to converge to generalizable equilibria. Theoretical results are verified on real world datasets.
\item We show that 0-GP helps the discriminator to distribute its capacity more equally between regions of the space, effectively preventing mode collapse. Experiments on synthetic and real world datasets verify that 0-GP can prevent mode collapse. GANs with 0-GP is much more robust to changes in hyper parameters, optimizers, and network architectures than the original GAN and GANs with other gradient penalties.
\end{enumerate}

Table \ref{tab:compare} compares the key properties of our 0-GP with one centered GP (1-GP) \citep{wgangp} and zero centered GP on real/fake samples only (0-GP-sample) \citep{whichGANConverge}.

\subsection*{Notations}

\begin{tabular}{ll}
\hline \\
$p_r$ & the target distribution \\
$p_g$ & the model distribution \\
$p_z$ & the noise distribution \\
$d_x$ & the dimensionality of a data sample (real or fake) \\
$d_z$ & the dimensionality of a noise sample \\
$supp(p)$ & the support of distribution $p$ \\
$\bm x \sim p_r$ & a real sample \\
$\bm z \sim p_z$ & a noise vector drawn from the noise distribution $p_z$ \\ 
$\bm{y} = G(\bm z)$ & a generated sample \\
$\mathcal{D}_r = \left\lbrace \bm{x}_1, ..., \bm x_n \right\rbrace$ & the set of $n$ real samples \\
$\mathcal{D}_g^{(t)} = \left\lbrace \bm{y}_1^{(t)}, ..., \bm y_m^{(t)} \right\rbrace$ & the set of $m$ generated samples at step $t$ \\ 
$\mathcal{D}^{(t)} = \mathcal{D}_r \cup \mathcal{D}_g^{(t)}$ & the training dataset at step $t$ \\
\hline
\end{tabular}

\section{Related works}
Gradient penalties are widely used in GANs literature. There are a plethora of works on using gradient penalty to improve the stability of GANs \citep{whichGANConverge, wgangp, wganlp, stabilizingGAN, lsgan}. However, these works mostly focused on making the training of GANs stable and convergent. Our work aims to improve the generalization capability of GANs via gradient regularization.

\cite{doGANLearnDist} showed that the number of modes in the model distribution grows linearly with the size of the discriminator. The result implies that higher capacity discriminators are needed for better approximation of the target distribution. \cite{disGenTradoff} studied the tradeoff between generalization and discrimination in GANs. The authors showed that generalization is guaranteed if the discriminator set is small enough. In practice, rich discriminators are usually used for better discriminative power. Our GP makes rich discriminators generalizable while remaining discriminative.

Although less mode collapse is not exactly the same as generalization, the ability to produce more diverse samples implies better generalization. There are a large number of papers on preventing mode collapse in GANs. 
\cite{dcgan, improvedGAN} introduced a number of empirical tricks to help stabilizing GANs. \cite{towardPrincipledGAN} showed the importance of divergences in GAN training, leading to the introduction of Wasserstein GAN \citep{wgan}. The use of weak divergence is further explored by \cite{fishergan, sobolevgan}. 
\cite{mixedSymmetricGAN} advocated the use of mixed-batches, mini-batches of real and fake data, to smooth out the loss surface. The method exploits the distributional information in a mini-batch to prevent mode collapse.
VEEGAN \citep{veegan} uses an inverse of the generator to map the data to the prior distribution. The mismatch between the inverse mapping and the prior is used to detect mode collapse. If the generator can remember the entire training set, then the inverse mapping can be arbitrarily close the the prior distribution. It suggests that VEEGAN might not be able to help GAN to generalize outside of the training dataset. Our method helps GANs to discover unseen regions of the target distribution, significantly improve the diversity of generated samples.
\section{Background}
In the original GAN, the discriminator $D$ maximizes the following objective
\begin{equation}
\label{eqn:ganloss}
\mathcal{L} = \mathbb{E}_{\bm x \sim p_r} [\log(D(\bm x))] + \mathbb{E}_{\bm z \sim p_z} [\log(1 - D(G(\bm z)))]
\end{equation}
\cite{gan} showed that if the density functions $p_g$ and $p_r$ are known, then for a fixed generator $G$ the optimal discriminator is 
\begin{equation}
D^*(\bm v) = \frac{p_r(\bm v)}{p_r(\bm v) + p_g(\bm v)}, \forall \bm v \in supp(p_r) \ \cup\ supp(p_g) \label{eqn:optimD}
\end{equation}
In the beginning of the training, $p_g$ is very different from $p_r$ so we have
$ p_r(\bm x) \gg p_g(\bm x), \text{ for } \bm x \in \mathcal{D}_r $ and
$ p_g(\bm y) \gg p_r(\bm y), \text{ for } \bm y \in \mathcal{D}_g $.
Therefore, in the beginning of the training
$D^*(\bm x) \approx 1, \text{ for } \bm x \in \mathcal{D}_r $ and 
$D^*(\bm y) \approx 0, \text{ for } \bm y \in \mathcal{D}_g $.
As the training progresses, the generator will bring $p_g$ closer to $p_r$. The game reaches the global equilibrium when $p_r = p_g$. At the global equilibrium,
$ D^*(\bm v) = \frac{1}{2}, \forall \bm v \in supp(p_r) \ \cup\ supp(p_g)$.
One important result of the original paper is that, if the discriminator is optimal at every step of the GAN algorithm, then $p_g$ converges to $p_r$.

In practice, density functions are not known and the optimal discriminator is approximated by optimizing the classification performance of a parametric discriminator $D(\cdot; \bm \theta_D)$ on a discrete finite dataset $\mathcal{D} = \mathcal{D}_r \cup \mathcal{D}_g$. We call a discriminator trained on a discrete finite dataset an empirical discriminator. The empirically optimal discriminator is denoted by $\hat{D}^*$. 

\cite{equiAndGeneralization} defined generalization of a divergence  $d$ as follow: A divergence $d$ is said to have generalization error $\epsilon$ if 
\begin{equation}
\abs{d(\mathcal{D}_g, \mathcal{D}_r) - d(p_g, p_r)} \le \epsilon
\label{eqn:generalization}
\end{equation}
A discriminator $D$ defines a divergence between two distributions. The performance of a discriminator with good generalization capability on the training dataset should be similar to that on the entire data space. In practice, generalization capability of $D$ can be estimated by measuring the difference between its performance on the training dataset and a held-out dataset.
\section{Generalization capability of discriminators}
\label{generalization}
\begin{table}
\centering
\begin{tabular}{|p{2cm}|p{4cm}|p{2cm}|p{2cm}|p{2cm}|}
\hline
GP & Formula & Improve generalization & Prevent grad expoding & Convergence guarantee \\
\hline
\hline
Our 0-GP & $\lambda \mathbb{E}_{\bm v \in \mathcal{C}}[\norm{(\nabla D)_{\bm v}}^2]$, $\mathcal{C}$ from $\bm y$ to $\bm x$ & \cmark & \cmark & \cmark \\
\hline
1-GP & $\lambda \mathbb{E}_{\tilde{\bm x}}[(\norm{(\nabla D)_{\tilde{\bm x}}} - 1)^2]$, where $\tilde{\bm x} = \alpha \bm x + (1 - \alpha) \bm y$ & \xmark & \cmark & \xmark \\
\hline
0-GP-sample & $\lambda \mathbb{E}_{\bm v \in \mathcal{D}}[\norm{(\nabla D)_{\bm v}}^2]$ & \xmark & \xmark & \cmark \\
\hline
\end{tabular}
\caption{Summary of different gradient penalties}
\label{tab:compare}
\end{table}

\subsection{The empirically optimal discriminator does not approximate the theoretically optimal discriminator}
It has been observed that if the discriminator is too good at discriminating real and fake samples, the generator cannot learn effectively \citep{gan, towardPrincipledGAN}. The phenomenon suggests that $\hat{D}^*$ does not well approximate $D^*$, and does not guarantee the convergence of $p_g$ to $p_r$. In the following, we clarify the mismatch between $\hat{D}^*$ and $D^*$, and its implications.

\begin{proposition}
The two datasets $\mathcal{D}_r$ and $\mathcal{D}_g^{(t)}$ are disjoint with probability $1$ regardless of how close the two distributions $p_r$ and $p_g^{(t)}$ are.
\label{prop:disjoint}
\end{proposition}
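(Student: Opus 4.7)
My plan is to exploit the fact that both $p_r$ and $p_g^{(t)}$ are continuous (atomless) distributions on $\mathbb{R}^{d_x}$: every singleton has probability zero. Under this mild assumption, any two independent finite i.i.d.\ samples — one from each distribution — will be disjoint almost surely, and this conclusion is completely insensitive to how close $p_r$ and $p_g^{(t)}$ are in any divergence.

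\textbf{Execution.} I would write $\mathcal{D}_r = \{\bm{x}_1,\ldots,\bm{x}_n\}$ with the $\bm{x}_i$ i.i.d.\ from $p_r$, and $\mathcal{D}_g^{(t)} = \{\bm{y}_1^{(t)},\ldots,\bm{y}_m^{(t)}\}$ with the $\bm{y}_j^{(t)}$ i.i.d.\ from $p_g^{(t)}$, independent of the $\bm{x}_i$. For any fixed pair of indices $(i,j)$, conditioning on $\bm{y}_j^{(t)}$ and using that $p_r$ is atomless,
\[
P\bigl(\bm{x}_i = \bm{y}_j^{(t)}\bigr) \;=\; \mathbb{E}\bigl[\,p_r\bigl(\{\bm{y}_j^{(t)}\}\bigr)\,\bigr] \;=\; 0.
\]
A union bound over the $nm$ pairs then gives
\[
P\bigl(\mathcal{D}_r \cap \mathcal{D}_g^{(t)} \neq \emptyset\bigr) \;\le\; \sum_{i=1}^{n}\sum_{j=1}^{m} P\bigl(\bm{x}_i = \bm{y}_j^{(t)}\bigr) \;=\; 0,
\]
which is the desired statement. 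Nothing in this argument uses closeness of $p_r$ and $p_g^{(t)}$; only atomlessness is invoked.

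\textbf{Main obstacle.} The only delicate step is justifying the atomless hypothesis for $p_g^{(t)}$, since $p_g^{(t)}$ is the pushforward of the noise distribution $p_z$ through the generator $G$ and is typically supported on a lower-dimensional subset of $\mathbb{R}^{d_x}$. The point to make explicit is that atomlessness of the pushforward only requires $p_z\bigl(G^{-1}(\{\bm{v}\})\bigr) = 0$ for every $\bm{v}$, which holds whenever $p_z$ is continuous and $G$ is, say, locally Lipschitz with $d_z \ge 1$ — a condition met by essentially every GAN architecture used in practice. Stating this assumption, together with the analogous atomlessness of $p_r$ (true for the continuous relaxations of natural image data typically assumed in the theoretical GAN literature), is what lets the short probabilistic argument above go through without any measure-theoretic caveats.
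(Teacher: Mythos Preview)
Your proposal is correct and follows essentially the same approach as the paper: both argue that continuous (atomless) distributions assign probability zero to any singleton, so each pairwise collision $\{\bm{x}_i=\bm{y}_j^{(t)}\}$ has probability zero, and a union bound over the finitely many pairs gives disjointness almost surely. Your treatment is in fact more careful than the paper's, which simply asserts the atomlessness without discussing the pushforward $p_g^{(t)}=G_\#p_z$; your remark that $p_z(G^{-1}(\{\bm v\}))=0$ under mild regularity of $G$ is exactly the missing justification.
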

\begin{proof}
See appendix \ref{appx:disjoint}.
\end{proof}

$\mathcal{D}_r$ and $\mathcal{D}_g^{(t)}$ are disjoint with probability 1 even when $p_g$ and $p_r$ are exactly the same. $\hat{D}^*$ perfectly classifies the real and the fake datasets, and
$\hat{D}^*(\bm x) = 1, \forall \bm x \in \mathcal{D}_r$
, $\hat{D}^*(\bm y) = 0, \forall \bm y \in \mathcal{D}_g^{(t)}$.
The value of $\hat{D}^*$ on $\mathcal{D}^{(t)}$ does not depend on the distance between the two distributions and does not reflect the learning progress. 
The value of $\hat{D}^*$ on the training dataset approximates that of $D^*$ in the beginning of the learning process but not when the two distributions are close.
When trained using gradient descent on a discrete finite dataset with the loss in Eqn. \ref{eqn:ganloss}, the discriminator $D$ is pushed toward $\hat{D}^*$, not $D^*$. This behavior does not depend on the size of training set (see Fig. \ref{fig:opDNoGP}, \ref{fig:opDNoGPlarge}), implying that \emph{the original GAN is not guaranteed to converge to the target distribution even when given enough data}.

\begin{figure}
\centering
\subfloat[] {\includegraphics[width=0.16\textwidth]{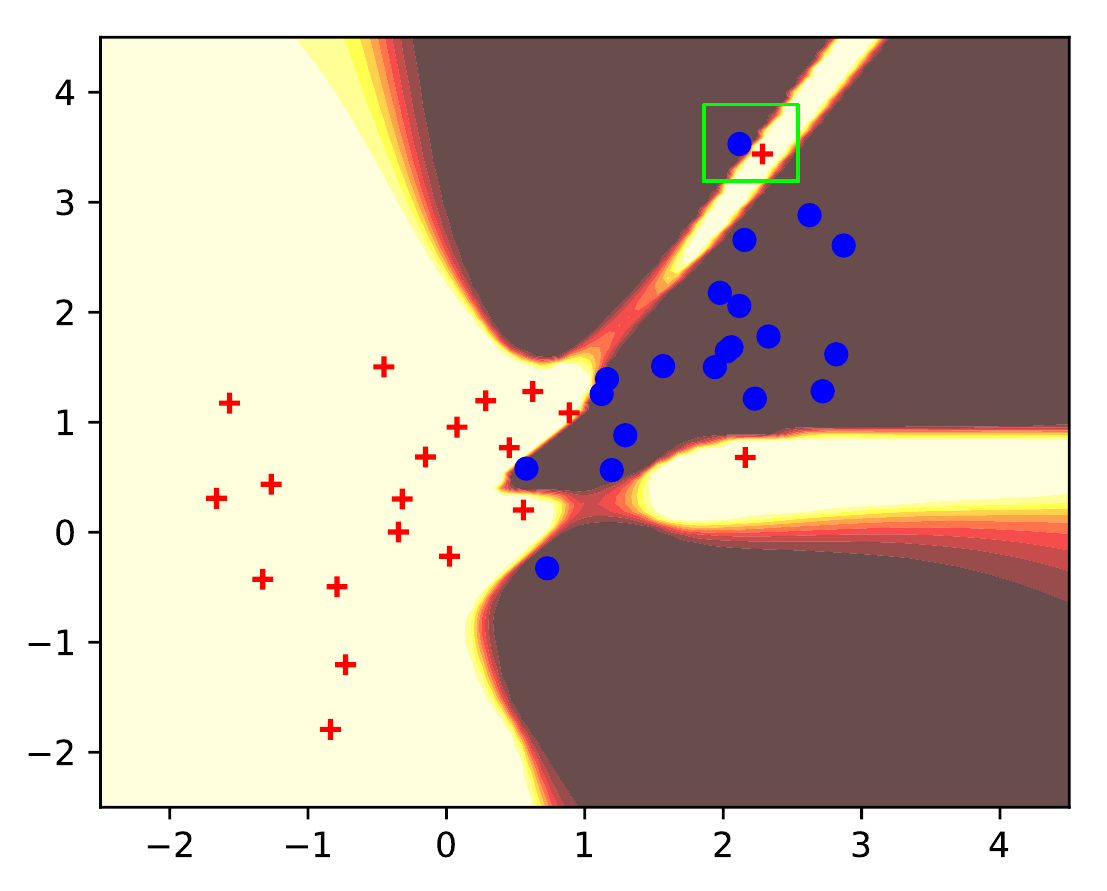} \label{fig:opDNoGP}}
\subfloat[] {\includegraphics[width=0.16\textwidth]{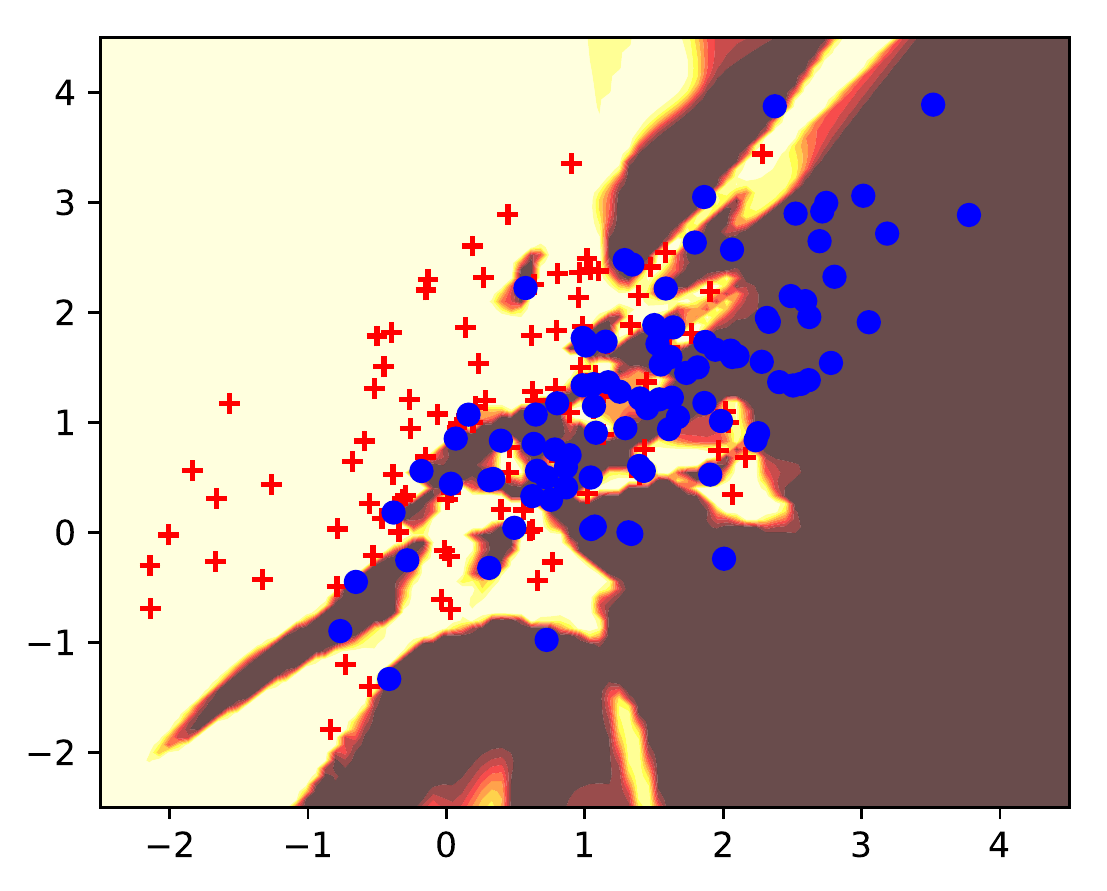} \label{fig:opDNoGPlarge}}
\subfloat[] {\includegraphics[width=0.16\textwidth]{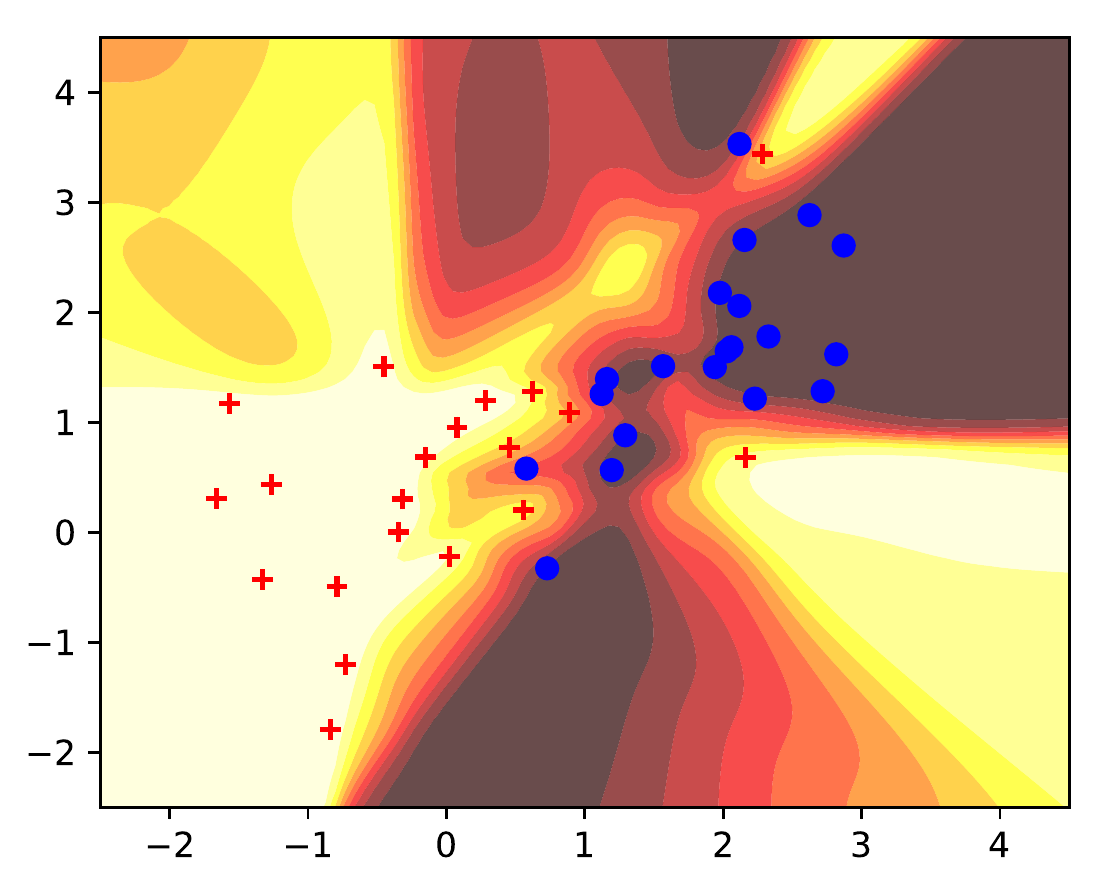} \label{fig:opD1GP}} 
\subfloat[] {\includegraphics[width=0.16\textwidth]{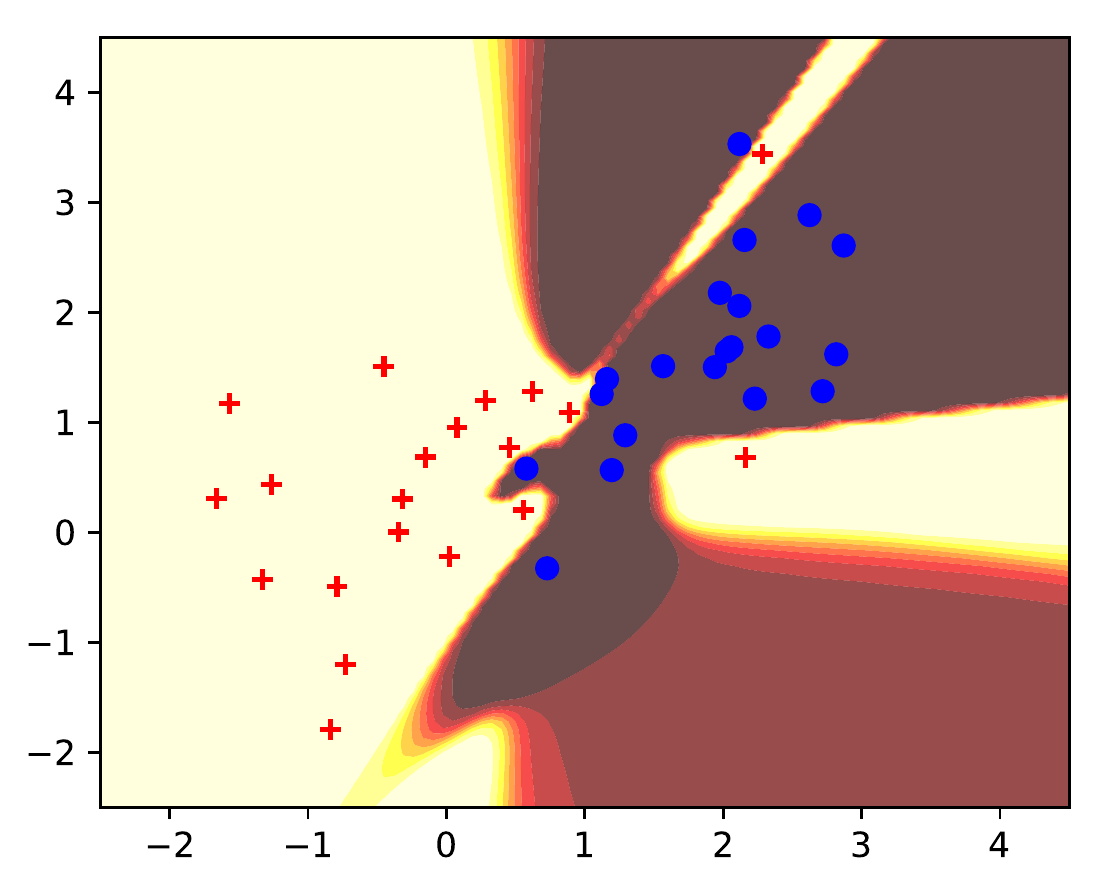} \label{fig:opD0real}}
\subfloat[] {\includegraphics[width=0.16\textwidth]{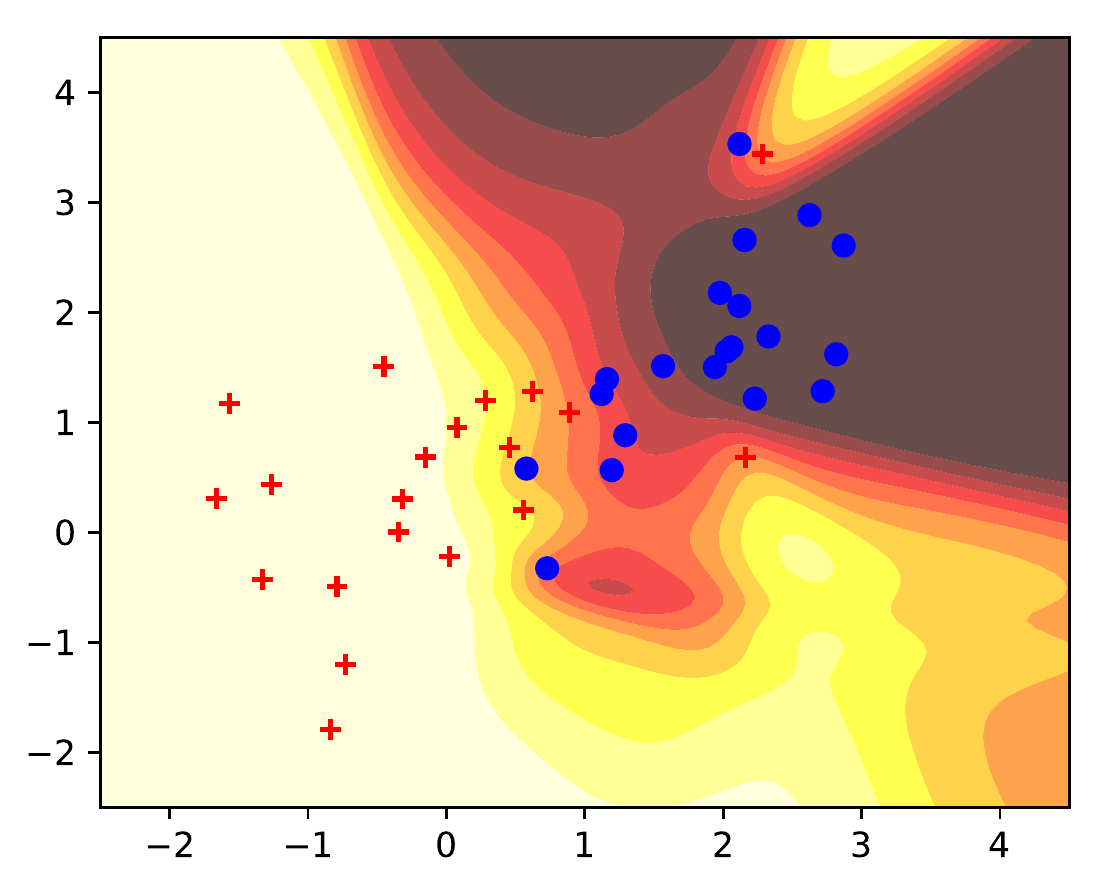} \label{fig:opD0GP}} 
\subfloat[] {\includegraphics[width=0.16\textwidth]{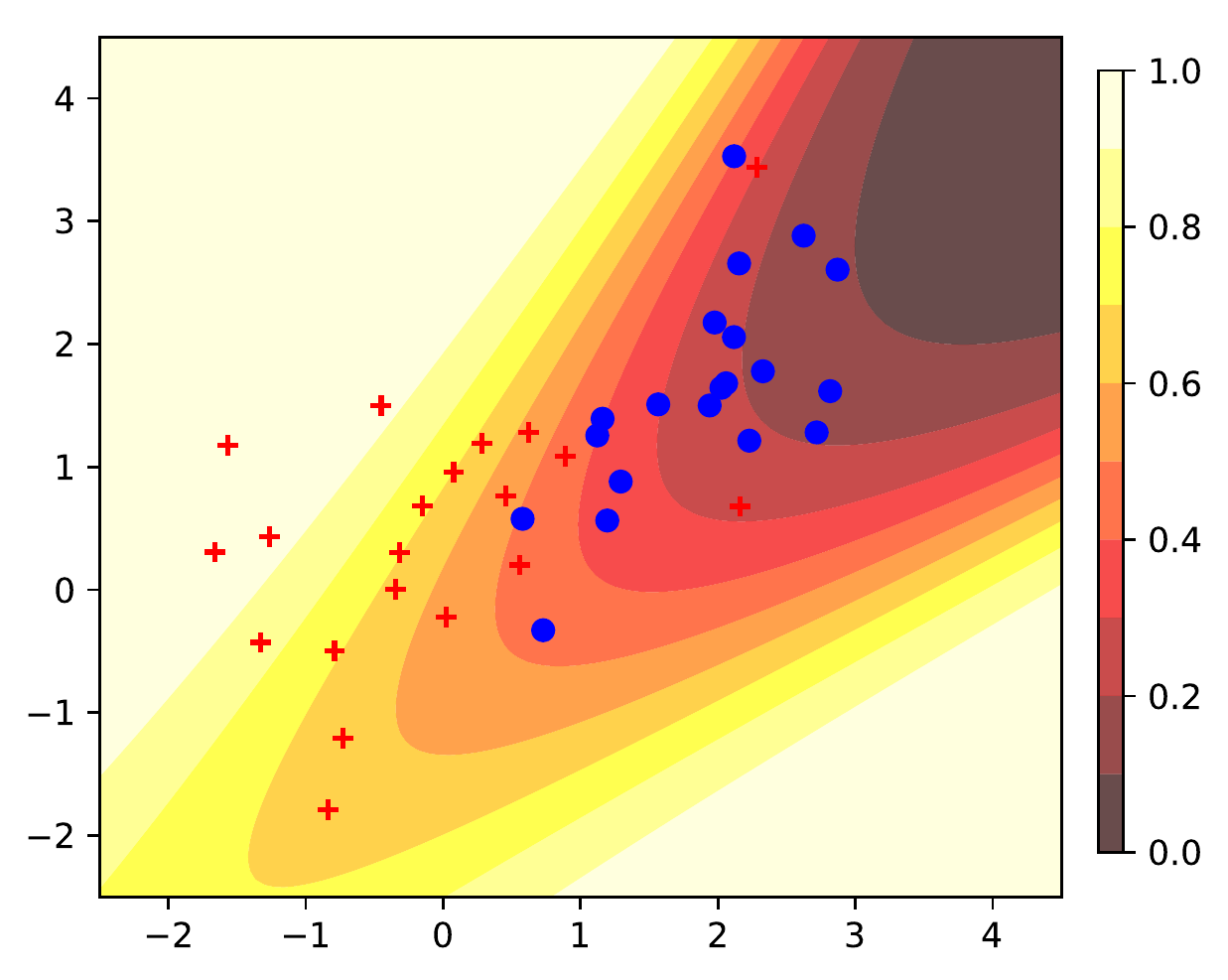} \label{fig:opDTheory}} 

\caption{Value surfaces of discriminators trained for 10,000 iterations with different gradient penalties, on samples from two Gaussian distributions. The discriminator is a 2 hidden layer MLP with 64 hidden neurons.
\protect\subref{fig:opDNoGP} No GP. \protect\subref{fig:opDNoGPlarge} No GP with more samples. \protect\subref{fig:opD1GP} One-centered GP (1-GP) with $\lambda = 1$. \protect\subref{fig:opD0real} Zero-centered GP on real/fake samples only (0-GP-sample) with $\lambda = 1$. \protect\subref{fig:opD0GP} Our zero-centered GP with $\lambda = 1$. \protect\subref{fig:opDTheory} Theoretically optimal discriminator computed using Eqn. \ref{eqn:optimD}.}
\label{fig:opD}
\end{figure}

\subsection{Empirical discriminators have poor generalization capability}
When the generator gets better, generated samples are more similar to samples from the target distribution. However, regardless of their quality, generated samples are still labeled as fake in Eqn. \ref{eqn:ganloss}. The training dataset $\mathcal{D}$ is a bad dataset as it contains many mislabeled examples. A discriminator trained on such dataset will overfit to the mislabeled examples and has poor generalization capability. It will misclassify unseen samples and cannot teach the generator to generate these samples. 

Figure \ref{fig:opDNoGP} and \ref{fig:opDNoGPlarge} demonstrate the problem on a synthetic dataset consisting of samples from two Gaussian distributions. The discriminator in Fig. \ref{fig:opDNoGP} overfits to the small dataset and does not generalize to new samples in Fig. \ref{fig:opDNoGPlarge}. Although the discriminator in Fig. \ref{fig:opDNoGPlarge} was trained on a larger dataset which is sufficient to characterize the two distributions, it still overfits to the data and its value surface is very different from that of the theoretically optimal discriminator in Fig. \ref{fig:opDTheory}.

An overfitted discriminator does not guide the model distribution toward target distribution but toward the real samples in the dataset. This explains why the original GAN usually exhibits mode collapse behavior. Finding the empirically optimal discriminator using gradient descent usually requires many iterations. Heuristically, overfitting can be alleviated by limiting the number of discriminator updates per generator update.
\cite{gan} recommended to update the discriminator once every generator update. In the next subsection, we show that limiting the number of discriminator updates per generator update prevents the discriminator from overfitting.


\subsubsection{$\epsilon$-optimal discriminators}
$\hat{D}^*$ is costly to find and maintain. 
We consider here a weaker notion of optimality which can be achieved in practical settings.
\begin{definition}[$\epsilon$-optimal discriminator]
\label{def:epsOptimD}
Given two disjoint datasets $\mathcal{D}_r$ and $\mathcal{D}_g$, and a number $\epsilon > 0$, a discriminator $D$ is $\epsilon$-optimal if 
\begin{eqnarray*}
D(\bm x) & \ge & \frac{1}{2} + \frac{\epsilon}{2}, \forall \bm x \in \mathcal{D}_r \\
D(\bm y) & \le & \frac{1}{2} - \frac{\epsilon}{2}, \forall \bm y \in \mathcal{D}_g \\
\end{eqnarray*}
\end{definition}
As observed in \cite{gan}, $\hat{D}^*$ does not generate usable gradient for the generator. Goodfellow et al. proposed the non-saturating loss for the generator to circumvent this vanishing gradient problem. For an $\epsilon$-optimal discriminator, if $\epsilon$ is relatively small, then the gradient of the discriminator w.r.t. fake datapoints might not vanish and can be used to guide the model distribution toward the target distribution. 
\begin{proposition}
Given two disjoint datasets $\mathcal{D}_r$ and $\mathcal{D}_g$, and a number $\epsilon > 0$, an $\epsilon$-optimal discriminator $D_{\epsilon}$ exists and can be constructed as a one hidden layer MLP with $\mathcal{O}(d_x(m + n))$ parameters. \label{prop:optimD}
\end{proposition}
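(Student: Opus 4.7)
The plan is to give an explicit construction in which each training point contributes one hidden unit. By Proposition~\ref{prop:disjoint}, the combined set $\mathcal{D}_r\cup\mathcal{D}_g=\{\bm{v}_1,\ldots,\bm{v}_N\}$ with $N:=n+m$ consists of pairwise distinct points in $\mathbb{R}^{d_x}$. I would target the pre-sigmoid values
\begin{equation*}
t_k = \sigma^{-1}\!\bigl(\tfrac{1}{2}+\tfrac{\epsilon}{2}\bigr) \text{ if } \bm{v}_k\in\mathcal{D}_r, \qquad t_k = \sigma^{-1}\!\bigl(\tfrac{1}{2}-\tfrac{\epsilon}{2}\bigr) \text{ if } \bm{v}_k\in\mathcal{D}_g,
\end{equation*}
and build a one hidden layer network of the form $D_{\epsilon}(\bm{v}) = \sigma\!\bigl(\sum_{k=1}^{N}\alpha_k\,\phi(\bm{w}_k^{\top}\bm{v}+b_k)\bigr)$ that interpolates these targets at every $\bm{v}_\ell$.

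The main step is a standard interpolation/memorization fact: for any $N$ distinct inputs in $\mathbb{R}^{d_x}$, one can choose hidden parameters $(\bm{w}_k,b_k)_{k=1}^{N}$ so that the feature matrix $H \in \mathbb{R}^{N\times N}$ with entries $H_{k\ell}=\phi(\bm{w}_k^{\top}\bm{v}_\ell+b_k)$ is nonsingular. Concretely, I would pick a generic direction $\bm{u}\in\mathbb{R}^{d_x}$ for which the projections $p_\ell=\bm{u}^{\top}\bm{v}_\ell$ are all distinct (the excluded set of directions has Lebesgue measure zero since it is a finite union of hyperplanes), reorder so that $p_1<p_2<\cdots<p_N$, and set $\bm{w}_k=s\,\bm{u}$ with biases $b_k=-s\,p_k+c_k$ for suitable $s$ and $c_k$; for either sigmoid or rectifier $\phi$ one can then force $H$ to be triangular with nonzero diagonal, hence invertible. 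Setting the output weights to $\bm{\alpha}=H^{-1}\bm{t}$ makes the pre-sigmoid output exactly $t_\ell$ at each $\bm{v}_\ell$, so $D_{\epsilon}$ satisfies Definition~\ref{def:epsOptimD}.

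Finally I would count parameters. Each of the $N$ hidden units uses $d_x$ input weights plus one bias, and the output layer adds $N$ weights (plus at most one output bias), for a total of $(d_x+2)\,N+1=\mathcal{O}(d_x(n+m))$ parameters.

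The main obstacle is producing hidden parameters that make $H$ nonsingular for a concrete activation family, since general memorization arguments usually assume a particular $\phi$. Reducing to one-dimensional projections along a generic $\bm{u}$ is what overcomes this: after that reduction the invertibility of $H$ follows from standard results on sigmoidal and piecewise-linear activations, and the rest of the proof is linear algebra and parameter counting.
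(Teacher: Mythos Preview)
Your argument is correct but follows a different route from the paper. The paper gives a very explicit ``nearest-neighbor'' construction: it assumes all samples are normalized to the unit sphere, sets the hidden weight matrix to $W_1=k[\bm v_1,\ldots,\bm v_{m+n}]^{\top}$, applies a \emph{softmax} across the hidden units, and reads off the output as $D(\bm v)=W_2^{\top}\sigma(W_1\bm v)$ with $W_{2,i}=\tfrac12\pm(\tfrac{\epsilon}{2}+\alpha)$. For large $k$ the softmax approximates a one-hot indicator of the matching training point, so $D(\bm v_j)\approx W_{2,j}$ lands on the correct side of $\tfrac12\pm\tfrac{\epsilon}{2}$.

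Your approach instead invokes the standard memorization/interpolation lemma: project onto a generic direction to make the feature matrix triangular and invertible, then solve for the output weights. This buys you generality (no unit-norm assumption, exact rather than asymptotic satisfaction of Definition~\ref{def:epsOptimD}, and it works for ordinary sigmoid or ReLU hidden units), at the cost of being less explicit---you need to invert $H$ rather than writing the weights down directly. The paper's construction is more concrete and avoids any linear algebra, but leans on the normalization hypothesis and on taking $k$ ``large enough.'' Both yield the same $\mathcal{O}(d_x(m+n))$ parameter count.
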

\begin{proof}
See appendix \ref{appx:constructOptimalD}. 
\end{proof}
Because deep networks are more powerful than shallow ones, the size of a deep $\epsilon$-optimal discriminator can be much smaller than $\mathcal{O}(d_x(m + n))$.
From the formula, the size of a shallow $\epsilon$-optimal discriminator for real world datasets ranges from a few to hundreds of millions parameters. That is comparable to the size of discriminators used in practice. \cite{towardPrincipledGAN} showed that even when the generator can generate realistic samples, a discriminator that can perfectly classify real and fake samples can be found easily using gradient descent. The experiment verified that $\epsilon$-optimal discriminator can be found using gradient descent in practical settings.

We observe that the norm of the gradient w.r.t. the discriminator's parameters decreases as fakes samples approach real samples. If the discriminator's learning rate is fixed, then the number of gradient descent steps that the discriminator has to take to reach $\epsilon$-optimal state should increase. 
\begin{proposition}
Alternating gradient descent with the same learning rate for discriminator and generator, and fixed number of discriminator updates per generator update (Fixed-Alt-GD) cannot maintain the (empirical) optimality of the discriminator. 
\end{proposition}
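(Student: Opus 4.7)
The plan is a ``chasing-a-moving-target'' argument: the $\epsilon$-optimal region in parameter space recedes as $\mathcal{D}_g^{(t)}$ closes in on $\mathcal{D}_r$, while the per-outer-iteration parameter-change budget delivered by $k$ inner gradient steps at rate $\alpha$ shrinks at the same time. By Proposition~\ref{prop:disjoint} we have $\mathcal{D}_r \cap \mathcal{D}_g^{(t)} = \emptyset$ for every $t$, so $\delta_t := \min_{\bm x \in \mathcal{D}_r,\, \bm y \in \mathcal{D}_g^{(t)}} \|\bm x - \bm y\| > 0$; because the generator drives $p_g^{(t)} \to p_r$, along a subsequence $\delta_t \to 0$.

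First I would show that the slope required to stay $\epsilon$-optimal diverges. For the closest pair $(\bm x^\ast, \bm y^\ast)$, $\epsilon$-optimality forces $D(\bm x^\ast) - D(\bm y^\ast) \ge \epsilon$, so by the mean value theorem some point on the segment $[\bm y^\ast, \bm x^\ast]$ has input-gradient norm at least $\epsilon/\delta_t$. Any network with bounded weights has a bounded input-Lipschitz constant, so realising such a slope forces $\|\bm\theta_D\|$ to grow without bound as $\delta_t \to 0$.

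Second, I would show the parameter-gradient signal driving this growth vanishes simultaneously. Using input-smoothness of $D(\cdot;\bm\theta_D)$, a Taylor expansion gives $\nabla_{\bm\theta_D} D(\bm x^\ast) = \nabla_{\bm\theta_D} D(\bm y^\ast) + O(\delta_t)$, and in the $\epsilon$-optimal regime the coefficients $1/D(\bm x^\ast)$ and $1/(1 - D(\bm y^\ast))$ each lie in $[1,\, 2/(1+\epsilon)]$ and agree up to $O(\delta_t)$ whenever the discriminator sits near its empirical maximum (where $\nabla_{\bm\theta_D}\mathcal{L}=0$ is the local identity). Consequently
\begin{equation*}
\nabla_{\bm\theta_D}\bigl[\log D(\bm x^\ast) + \log(1 - D(\bm y^\ast))\bigr] = O(\delta_t),
\end{equation*}
so summing the close-pair contributions yields $\|\nabla_{\bm\theta_D}\mathcal{L}\| = O(\delta_t)$.

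Combining the two, the per-outer-iteration change in $\bm\theta_D$ is at most $k\alpha\|\nabla_{\bm\theta_D}\mathcal{L}\| = O(k\alpha\delta_t)\to 0$, while preserving $\epsilon$-optimality demands $\|\bm\theta_D\|\to\infty$; once $\delta_t$ is small enough (depending only on $k,\alpha,\epsilon$ and an architecture constant), the available budget cannot cover the required growth, so some close pair must drop below the $\epsilon$ margin. The main obstacle is making the cancellation in step two rigorous without committing to a specific architecture: one needs a uniform input-smoothness bound on $\nabla_{\bm\theta_D}D$, which is immediate for smooth-activation networks but requires a piecewise treatment for ReLU networks. I would state this as a mild regularity assumption, and remark that non-monotonicity of $\delta_t$ simply weakens the conclusion from ``for all sufficiently large $t$'' to ``for infinitely many $t$,'' which still captures the informal claim that Fixed-Alt-GD cannot maintain optimality.
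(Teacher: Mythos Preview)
Your high-level strategy---show that maintaining $\epsilon$-optimality forces $\|\bm\theta_D\|\to\infty$ while the per-iteration budget $k\alpha\|\nabla_{\bm\theta_D}\mathcal{L}\|$ shrinks---is the same as the paper's. The paper, however, does not attempt a general argument: it explicitly restricts to single-point datasets $\mathcal{D}_r=\{\bm x\}$, $\mathcal{D}_g^{(t)}=\{\bm y^{(t)}\}$, a \emph{linear} discriminator $D(\bm v)=\bm\theta_D^\top\bm v$, a linear generator, and the Wasserstein objective $\mathcal{L}_{\mathcal{W}}=D(\bm x)-D(\bm y^{(t)})$. In that setting $\nabla_{\bm\theta_D}\mathcal{L}_{\mathcal{W}}=\bm x-\bm y^{(t)}$ identically, so the vanishing-gradient step is a one-line computation with no hidden constants, and $\|\bm\theta_D\|\ge\epsilon/\|\bm x-\bm y^{(t)}\|$ follows directly from Cauchy--Schwarz. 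The paper then also compares $\|\nabla_{\bm\theta_G}\mathcal{L}_{\mathcal{W}}\|=\|\bm\theta_D\|\|\bm z\|$, which grows, to make the ratio argument quantitative.

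Your attempt at generality has a genuine gap in step two. The Taylor bound $\nabla_{\bm\theta_D}D(\bm x^\ast)=\nabla_{\bm\theta_D}D(\bm y^\ast)+O(\delta_t)$ hides a constant equal to the input-Lipschitz constant of $\bm v\mapsto\nabla_{\bm\theta_D}D(\bm v;\bm\theta_D)$, and for any nonlinear network this constant scales with (powers of) $\|\bm\theta_D\|$. But step one has just established that $\|\bm\theta_D\|\to\infty$ along any trajectory that stays $\epsilon$-optimal, so the bound is really $O(C(\bm\theta_D)\,\delta_t)$ with $C(\bm\theta_D)\to\infty$, and the two effects can cancel. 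You flag the right obstacle (``uniform input-smoothness bound on $\nabla_{\bm\theta_D}D$''), but misdiagnose it: the needed uniformity is in $\bm\theta_D$, not in the activation class, and no architecture-level regularity assumption delivers it. A secondary gap: the claim that $1/D(\bm x^\ast)$ and $1/(1-D(\bm y^\ast))$ agree up to $O(\delta_t)$ does not follow from $\epsilon$-optimality, which only pins each to $[1,2/(1+\epsilon)]$; invoking ``the discriminator sits near its empirical maximum'' is an extra hypothesis not in the statement. The clean fix is exactly the paper's: specialise to the linear model, where $\nabla_{\bm\theta_D}D(\bm v)=\bm v$ is independent of $\bm\theta_D$ and the log-loss coefficients disappear.
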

Fixed-Alt-GD decreases the discriminative power of the discriminator to improve its generalization capability. The proof for linear case is given in appendix \ref{appx:maintainOptimal}. 

In GANs trained with Two Timescale Update Rule (TTUR) \citep{ttur}, the ratio between the learning rate of the discriminator and that of the generator goes to infinity as the iteration number goes to infinity. Therefore, the discriminator can learn much faster than the generator and might be able to maintain its optimality throughout the learning process.

\todo[inline]{Discussing the TTUR behavior is beyond the scope of this paper but we note several things: 1. TTUR is not correctly implemented in practice, the learning rate ratio is usually fixed during training. 2. Maintaining optimality does not mean that a fake sample cannot approach a real sample. If the non-saturating loss is used then the generator can move fake datapoints closer to real datapoints even though the discriminator has overfitted. Fig. \ref{fig:gradField0GPTTUR20k} show that the fake samples collapse exactly some real samples and the gradient w.r.t. these datapoints vanishes. This does not contradict our analysis. We only state that TTUR might be able to maintain the optimality. TTUR makes the generator stays where it is, i.e. makes the game to come to a local equilibrium. 3. Maximizing the discriminative power for 2 fixed datasets always result in gradient exploding. The weight has to go to infinity in order to make $D(x) = 1$, $D(y) = 0$.}

\subsubsection{Gradient exploding in $\epsilon$-optimal discriminators}\label{gradExploding}
Let's consider a simplified scenario where the real and the fake datasets each contains a single datapoint:  $\mathcal{D}_r = \left\lbrace \bm x \right\rbrace$, $\mathcal{D}_g^{(t)} = \left\lbrace \bm y^{(t)} \right\rbrace$.
Updating the generator according to the gradient from the discriminator will push $\bm y^{(t)}$ toward $\bm x$. The absolute value of directional derivative of $D$ in the direction $\bm u = \bm x - \bm y^{(t)}$, at $\bm x$ is 
\[ \abs{(\nabla_{\bm u}D)_{\bm x}}  = \lim_{\bm y^{(t)} \xrightarrow{\bm u} \bm x}{\frac{\abs{D(\bm x) - D(\bm y^{(t)})}}{\norm{\bm x - \bm y^{(t)}}}} \]
If $D$ is always $\epsilon$-optimal, then $\abs{D(\bm x) - D(\bm y^{(t)})} \ge \epsilon, \forall t \in \mathbb{N}$, and
\[ \abs{(\nabla_{\bm u}D)_{\bm x}}  \ge \lim_{\bm y^{(t)} \xrightarrow{\bm u} \bm x}{\frac{\epsilon}{\norm{\bm x - \bm y^{(t)}}}} = \infty \]

The directional derivate of the $\epsilon$-optimal discriminator explodes as the fake datapoint approaches the real datapoint. 
Directional derivative exploding implies gradient exploding at datapoints on the line segment connecting $\bm x$ and $\bm y^{(t)}$.
If in the next iteration, the generator produces a sample in a region where the gradient explodes, then the gradient w.r.t. the generator's parameters explodes.

Let's consider the following line integral
\begin{equation}\label{eqn:lineInt}
\int_{\mathcal{C}}{(\nabla D)_{\bm v} \cdot d\bm s} = D(\bm x) - D(\bm y^{(t)}) 
\end{equation}
where $\mathcal{C}$ is the line segment from $\bm y^{(t)}$ to $\bm x$. As the model distribution gets closer to the target distribution, the length of $\mathcal{C}$ should be non increasing. Therefore, maximizing $D(\bm x) - D(\bm y^{(t)})$, or the discriminative power of $D$, leads to the maximization of the directional derivative of $D$ in the direction $d \bm s$. The original GAN loss makes $D$ to maximize its discriminative power, encouraging gradient exploding to occur.
\todo[inline]{The original GAN loss actually encourages gradient exploding on every line connecting every pair of samples. That explain the situation in Fig. \ref{fig:gradField0GPTTUR10k}. Because of the saturated regions in sigmoid function, the gradient near training examples will vanish exponentially fast while the gradient near decision boundary will grow linearly. That result in the accumulation of gradient near the decision boundary.}

Gradient exploding happens in the discriminator trained with TTUR in Fig. \ref{fig:gradFieldNoGPTTUR} and \ref{fig:gradFieldNoGPTTUR10k}. Because TTUR can help the discriminator to maintain its optimality, gradient exploding happens and persists throughout the training process. Without TTUR, the discriminator cannot maintain its optimality so gradient exploding can happen sometimes during the training but does not persist (Fig. \ref{fig:gradFieldNoGP} and \ref{fig:gradFieldNoGP10k}). 
Because of the saturated regions in the sigmoid function used in neural network based discriminators, the gradient w.r.t. datapoints in the training set could vanishes. However, gradient exploding must happen at some datapoints on the path between a pair of samples, where the sigmoid function does not saturate. In Fig. \ref{fig:opDNoGP}, gradient exploding happens near the decision boundary. 
\todo[inline]{Discuss the difference between linear loss and cross entropy loss. For linear loss, the gradient is the same everywhere on the line segment. For cross entropy loss + sigmoid activation function, gradient explodes only for points near the decision boundary, and gradients near training samples will vanish.}

In practice, $\mathcal{D}_r$ and $\mathcal{D}_g$ contain many datapoints and the generator is updated using the average of gradients of the discriminator w.r.t. fake datapoints in the mini-batch. If a fake datapoint $\bm y_0$ is very close to a real datapoint $\bm x_0$, the gradient $(\nabla D)_{\bm y_0}$ might explode. When the average gradient is computed over the mini-batch, $(\nabla D)_{\bm y_0}$ outweighs other gradients. The generator updated with this average gradient will move many fake datapoints in the direction of $(\nabla D)_{\bm y_0}$, toward $\bm x_0$, making mode collapse visible. 


\begin{figure}
\centering
\subfloat[] {\includegraphics[width=.19\textwidth]{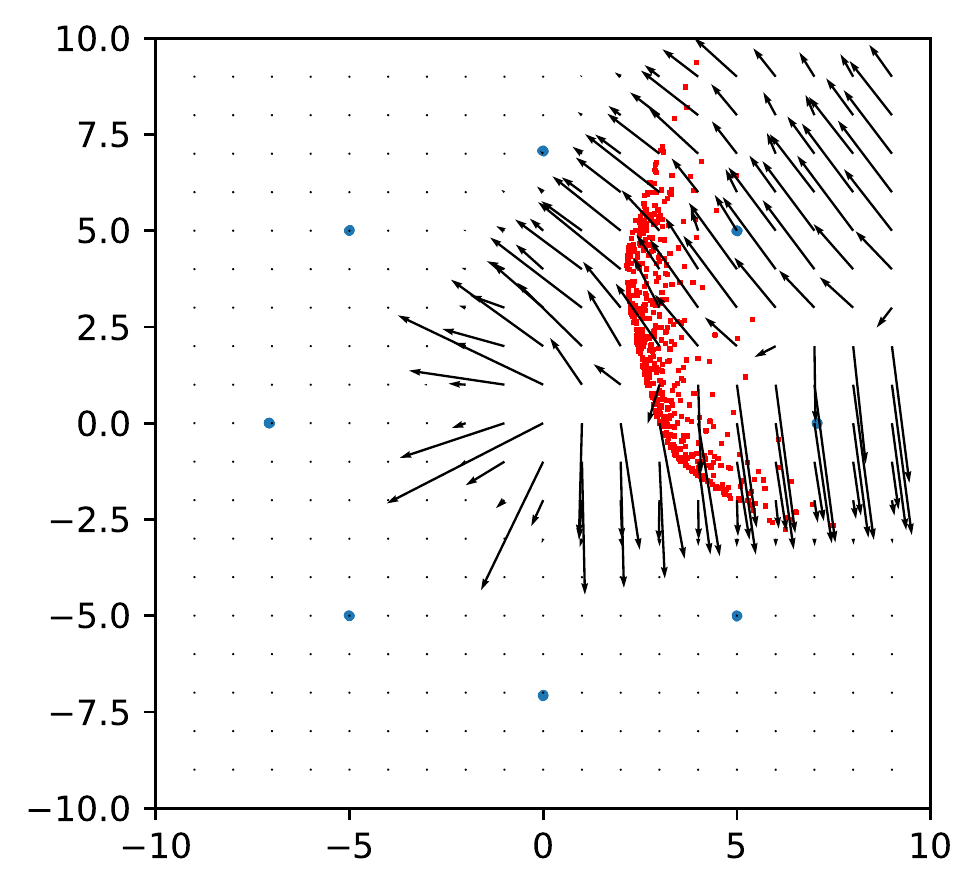} \label{fig:gradFieldNoGP}} 
\subfloat[] {\includegraphics[width=.19\textwidth]{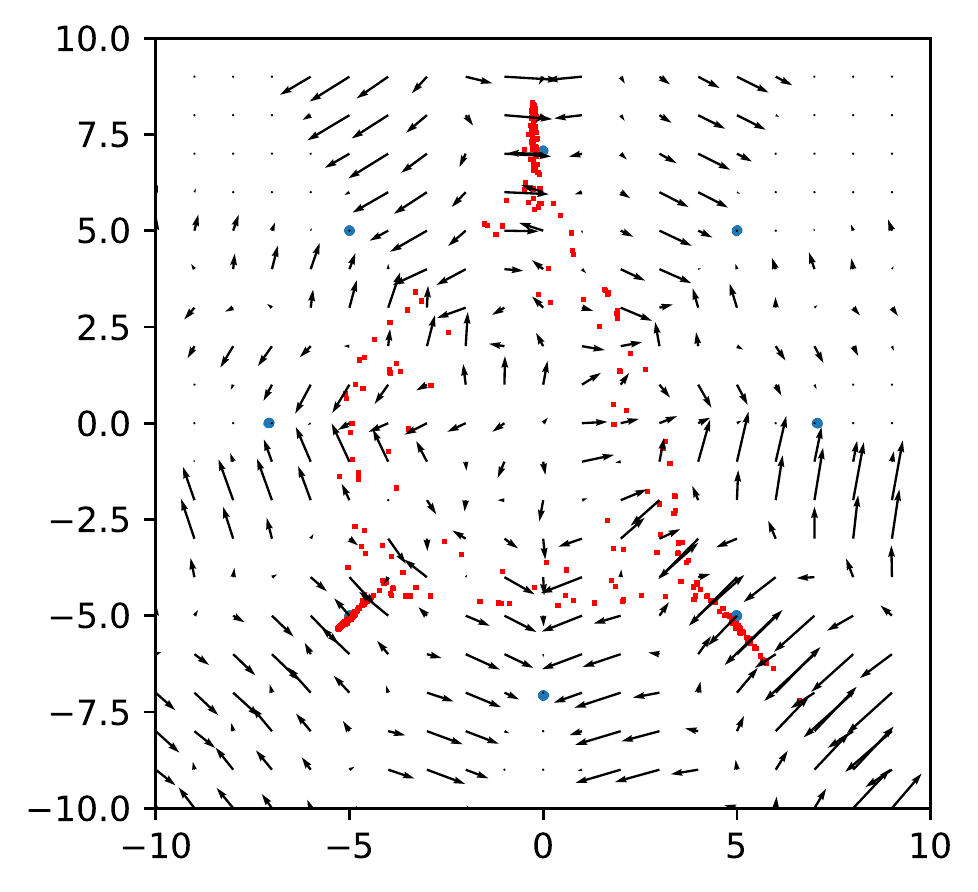} \label{fig:gradFieldNoGP10k}} 
\subfloat[] {\includegraphics[width=.19\textwidth]{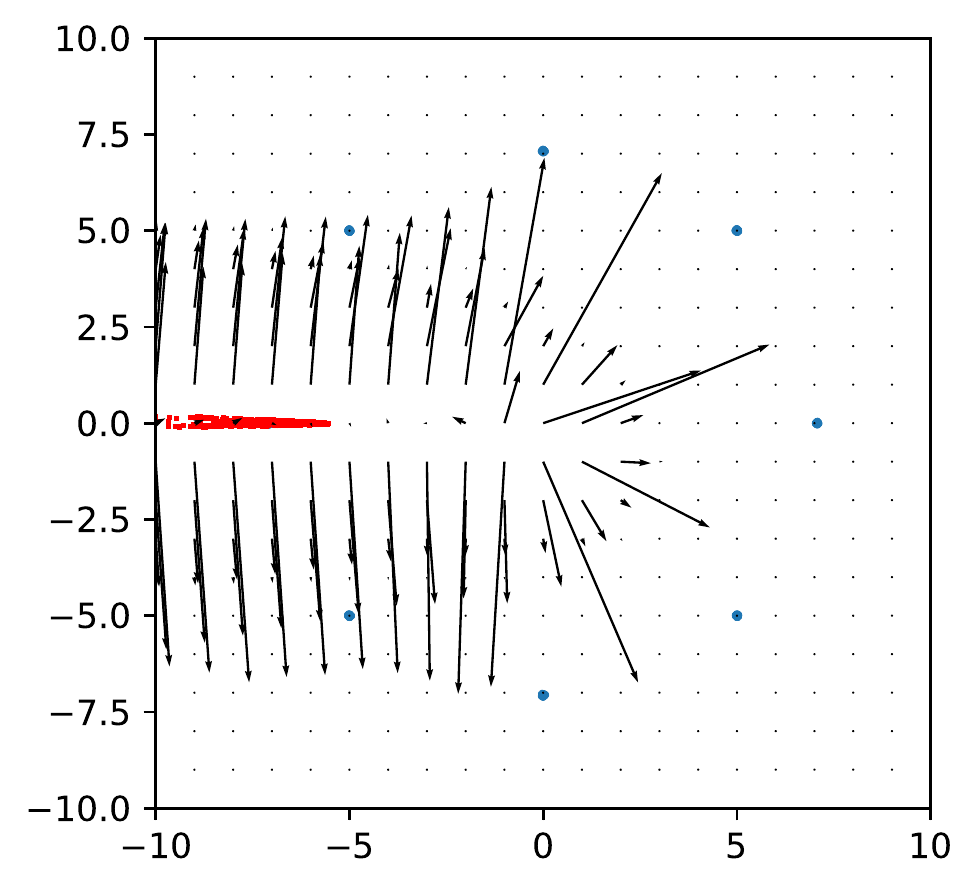} \label{fig:gradFieldNoGPTTUR}}
\subfloat[] {\includegraphics[width=.19\textwidth]{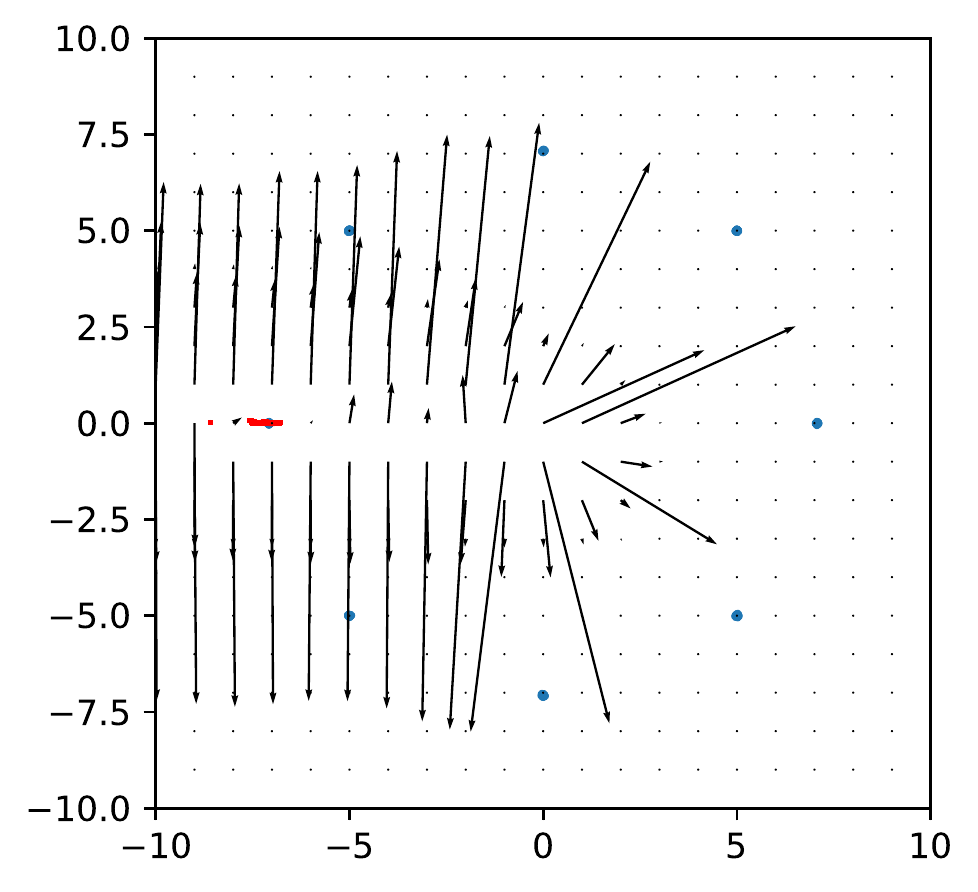} \label{fig:gradFieldNoGPTTUR10k}} \\
\subfloat[] {\includegraphics[width=.19\textwidth]{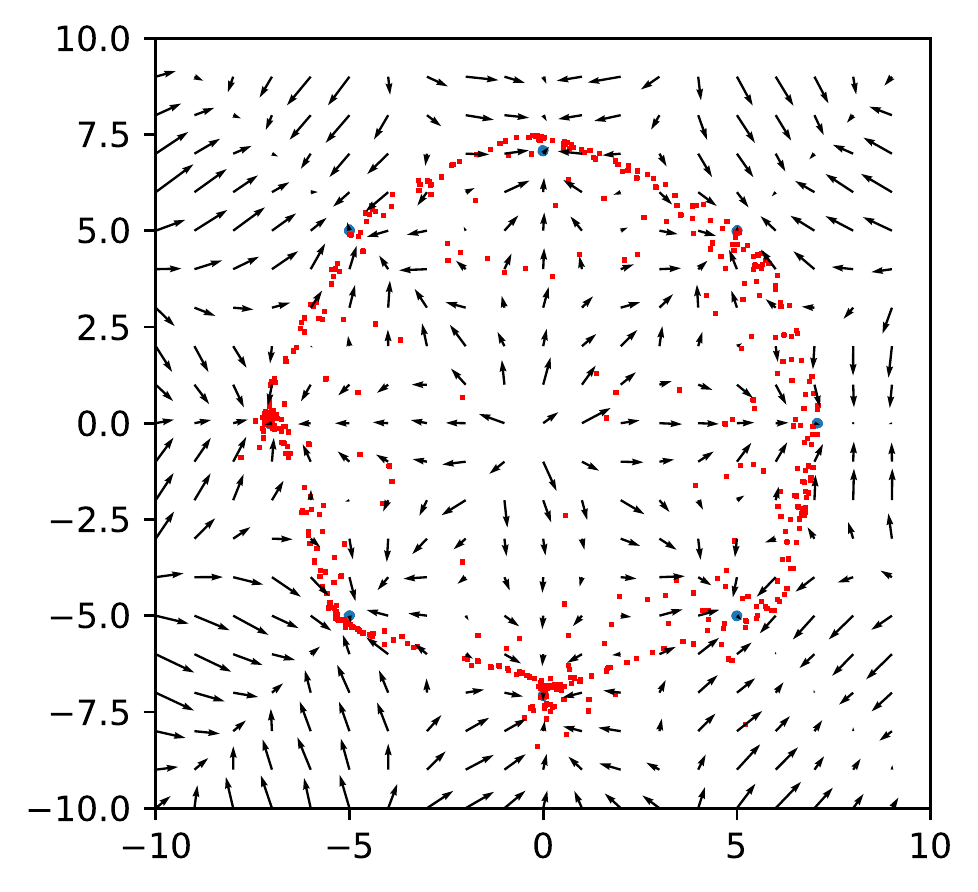} \label{fig:gradField0GP10k}} 
\subfloat[] {\includegraphics[width=.19\textwidth]{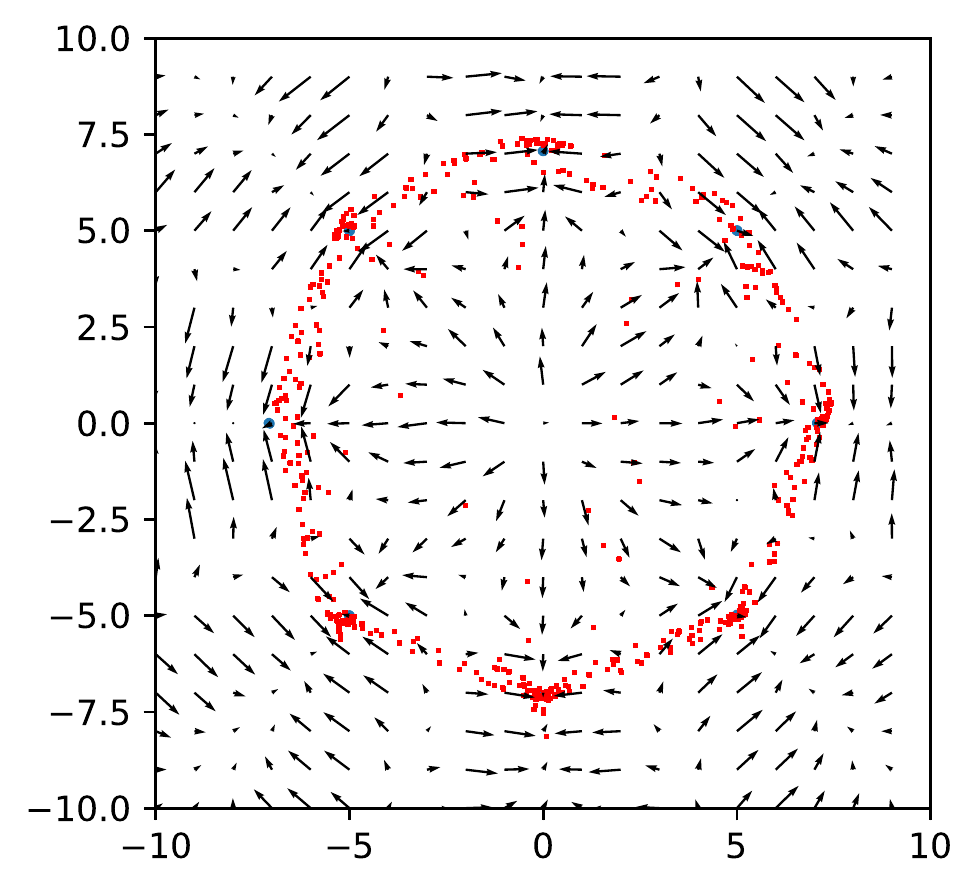} \label{fig:gradField0GPTTUR10k}}
\subfloat[] {\includegraphics[width=.19\textwidth]{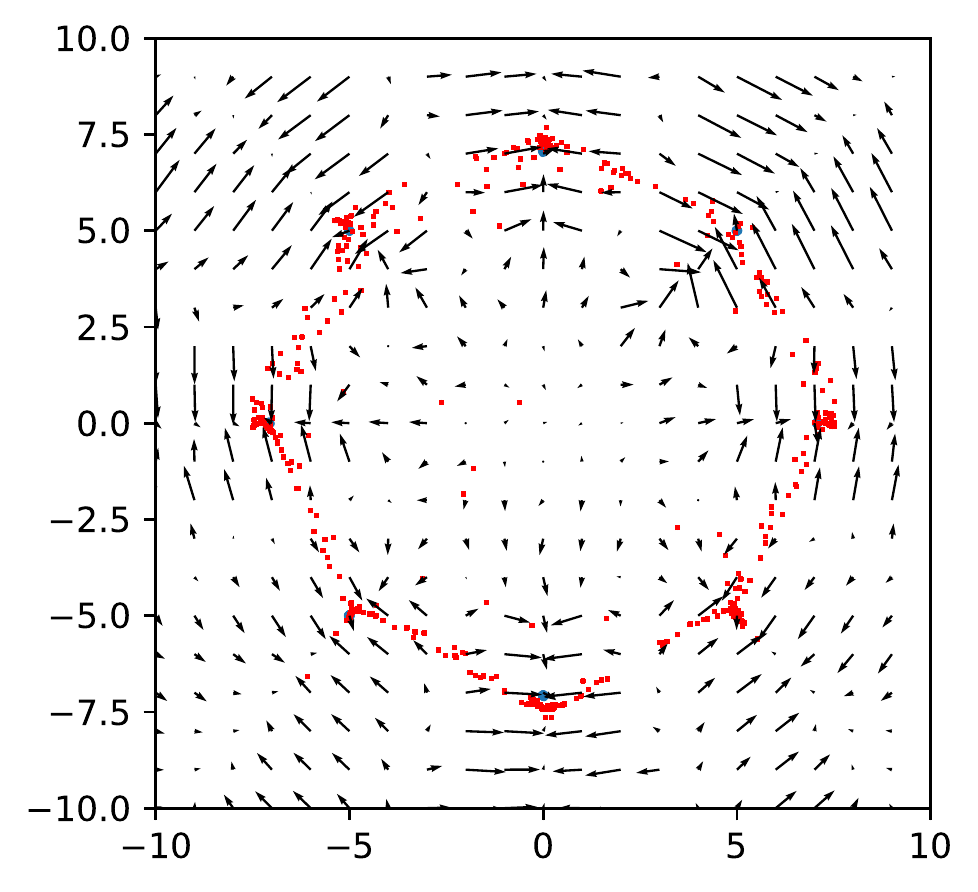} \label{fig:gradField0GPTTUR20k}}
\subfloat[] {\includegraphics[width=.19\textwidth]{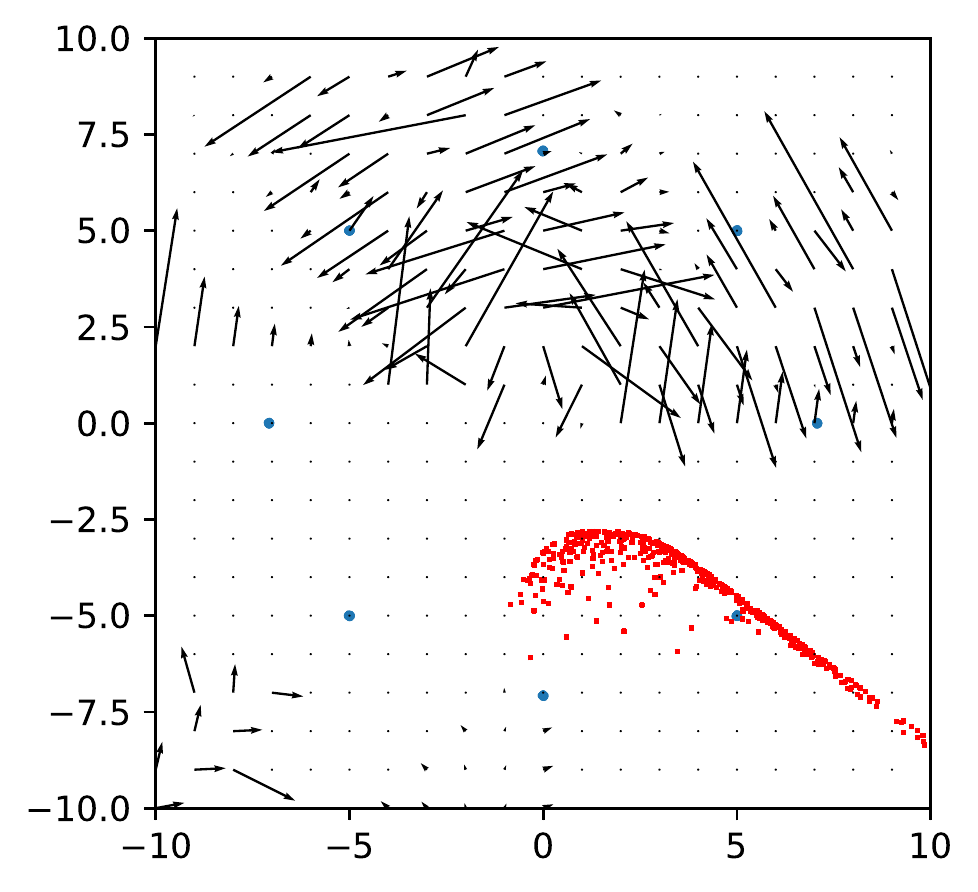} \label{fig:gradField1GP10k}}
\subfloat[] {\includegraphics[width=.19\textwidth]{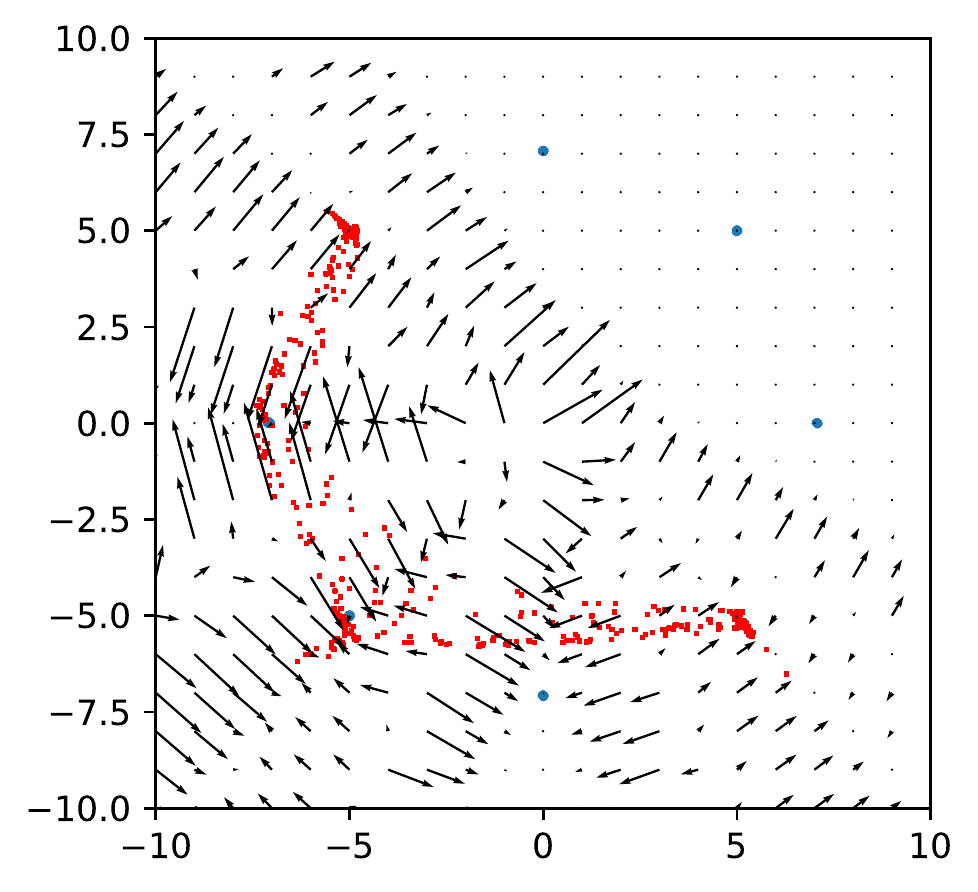} \label{fig:gradField0real10k}}

\caption{Gradient w.r.t. the input of the discriminator of a GAN trained with different gradient penalties. The vector associated with a datapoint $\bm v$ points in the direction that increases the value of $\log\left(D(\bm v)\right)$ the fastest. The discriminator is a 2 hidden layer MLP with 512 hidden neurons. The discriminator is updated once every generator update. SGD is used for optimization.
\protect\subref{fig:gradFieldNoGP}, \protect\subref{fig:gradFieldNoGP10k} No GP, iter. 1000 and 10,000. 
\protect\subref{fig:gradFieldNoGPTTUR},
\protect\subref{fig:gradFieldNoGPTTUR10k} No GP with TTUR, iter. 1,000 and 10,000.
\protect\subref{fig:gradField0GP10k} Our 0-GP with $\lambda = 10$, iter. 10,000. 
\protect\subref{fig:gradField0GPTTUR10k}, \protect\subref{fig:gradField0GPTTUR20k} Our 0-GP with TTUR and $\lambda = 10$, iter. 10,000 and 20,000. 
\protect\subref{fig:gradField1GP10k} 1-GP with $\lambda = 10$, iter. 10,000. 
\protect\subref{fig:gradField0real10k} 0-GP-sample with $\lambda = 10$, iter. 10,000. 
}
\label{fig:gradField}
\end{figure}

\section{Improving generalization capability of empirical discriminators}
\label{improve}
Although the theoretically optimal discriminator $D^*$ is generalizable, the original GAN loss does not push empirical discriminators toward $D^*$.
We aim to improve the generalization capability of empirical discriminators by pushing them toward $D^*$. 
\subsection{Pushing empirical discriminators toward $D^*$}
For any input $\bm v \in supp(p_r) \ \cup\ supp(p_g)$, the value of $D^*(\bm v)$ goes to $\frac{1}{2}$ and the gradient $(\nabla D)_{\bm v}$ goes to $\bm 0$ as $p_g$ approaches $p_r$. 
Consider again the line integral in Eqn. \ref{eqn:lineInt}. As $D^*(\bm x)$ and $D^*(\bm y)$ approach $\frac{1}{2}$ for all $\bm x \in supp(p_r)$ and $\bm y \in supp(p_g)$, we have
\begin{equation}
 D^*(\bm x) - D^*(\bm y) = \int_{\mathcal{C}}{(\nabla D^*)_{\bm v} \cdot d\bm s} \rightarrow 0 \label{eqn:0lineInt}
\end{equation}

for all pairs of $\bm x$ and $\bm y$ and all paths $\mathcal{C}$ from $\bm y$ to $\bm x$.
That means, the discriminative power of $D^*$ must decrease as the two distributions become more similar.

To push an empirical discriminator $D$ toward $D^*$, we force $D$ to satisfy two requirements:
\begin{enumerate}
\item $(\nabla D)_{\bm v} \rightarrow \bm 0, \forall \ \bm v \in supp(p_r) \ \cup\ supp(p_g)$
\item $D(\bm x) - D(\bm y) = \int_{\mathcal{C}}{(\nabla D)_{\bm v} \cdot d\bm s} \rightarrow 0, \forall \ \bm x \sim p_r, \bm y \sim p_g, \mathcal{C} \text{ from } \bm y \text{ to } \bm x$
\end{enumerate} 
\subsection{Zero-centered gradient penalty}
The first requirement can be implemented by sampling some datapoints $\bm v \in supp(p_r) \ \cup\ supp(p_g)$ and force $(\nabla D)_{\bm v}$ to be $\bm 0$.
The second requirement can be implemented by sampling pairs of real and fake datapoints $(\bm x, \bm y)$ and force $D(\bm x) - D(\bm y)$ to be 0. The two requirements can be added to the discriminator's objective as follows
\begin{equation*}
\hat{\mathcal{L}} = \mathcal{L} - \lambda_1 \mathbb{E}_{\bm v}[\norm{(\nabla D)_{\bm v}}^2] - \lambda_2 \mathbb{E}_{\bm x, \bm y}[\left( D(\bm x) - D(\bm y) \right)^2]
\end{equation*}
where $\mathcal{L}$ is the objective in Eqn. \ref{eqn:ganloss}.
However, as discussed in section \ref{gradExploding}, an $\epsilon$-optimal discriminator can have zero gradient on the training dataset and have gradient exploding outside of the training dataset. The gradient norm could go to infinity even when $D(\bm x) - D(\bm y)$ is small. Regulating the difference between $D(\bm x)$ and $D(\bm y)$ is not an efficient way to prevent gradient exploding. 

We want to prevent gradient exploding on every path in $supp(p_r) \ \cup\ supp(p_g)$. Because $(\nabla D^*)_{\bm v} \rightarrow \bm 0$ for all $\bm v \in supp(p_r)\ \cup\ supp(p_g)$ as $p_g$ approach $p_r$, we could push the gradient w.r.t. every datapoint on every path $\mathcal{C} \in supp(p_r)\ \cup\ supp(p_g)$ toward $\bm 0$. We note that, if $(\nabla D)_{\bm v} \rightarrow \bm 0, \forall \ \bm v \in \mathcal{C}$ then $\int_{\mathcal{C}}{(\nabla D)_{\bm v} \cdot d\bm s} \rightarrow 0$. Therefore, the two requirements can be enforced by a single zero-centered gradient penalty of the form 
\begin{equation*}
\lambda \mathbb{E}_{\bm v \in \mathcal{C}}[\norm{(\nabla D)_{\bm v}}^2]
\end{equation*}
The remaining problem is how to find the path $\mathcal{C}$ from a fake to a real sample which lies inside $supp(p_r)\ \cup\ supp(p_g)$. Because we do not have access to the full supports of $p_r$ and $p_g$, and the supports of two distributions could be disjoint in the beginning of the training process, finding a path which lies completely inside the support is infeasible. 

In the current implementation, we approximate $\mathcal{C}$ with the straight line connecting a pair of samples, although there is no guarantee that all datapoints on that straight line are in $supp(p_r)\ \cup\ supp(p_g)$. That results in the following objective
\begin{equation}
\mathcal{L}_{0-GP} = \mathcal{L} - \lambda \mathbb{E}_{\tilde{\bm x}}[\norm{(\nabla D)_{\tilde{\bm x}}}^2] \label{eqn:l0gp}
\end{equation}
where $\tilde{\bm x} = \alpha \bm x + (1 - \alpha) \bm y$, $\bm x \sim p_r$, $\bm y \sim p_g$, and $\alpha \sim \mathcal{U}(0, 1)$ \footnote{\cite{wassersteinDiv} independently proposed the Wasserstein divergence for WGAN which uses a gradient penalty of similar form. Although the two penalties have similar approximate form, they have different motivations and addresses different problems in GANs.}. We describe a more sophisticated way of finding a better path in appendix \ref{appx:pathFinding}.

The larger $\lambda$ is, the stronger $(\nabla D)_{\tilde{\bm x}}$ is pushed toward $\bm 0$. If $\lambda$ is 0, then the discriminator will only focus on maximizing its discriminative power. If $\lambda$ approaches infinity, then the discriminator has maximum generalization capability and no discriminative power. $\lambda$ controls the tradeoff between discrimination and generalization in the discriminator. 

\subsection{Generalization capability of different gradient penalties} \label{gradPenGeneralization}
\cite{whichGANConverge} proposed to force the gradient w.r.t. datapoints in the real and/or fake dataset(s) to be $\bm 0$ to make the training of GANs convergent. 
In section \ref{generalization}, we showed that for discrete training dataset, an empirically optimal discriminator $\hat{D}^*$ always exists and could be found by gradient descent.
Although 
$(\nabla \hat{D}^*)_{\bm v} = \bm 0, \forall \ \bm v \in \mathcal{D}$, $\hat{D}^*$ does not satisfy the requirement in Eqn. \ref{eqn:0lineInt} and have gradient exploding when some fake datapoints approach a real datapoint. 
The discriminators in Fig. \ref{fig:opDNoGP}, \ref{fig:opDNoGPlarge}, \ref{fig:opD0real}, \ref{fig:gradFieldNoGPTTUR} and \ref{fig:gradFieldNoGPTTUR10k} have vanishingly small gradients on datapoints in the training dataset and very large gradients outside. They have poor generalization capability and cannot teach the generator to generate unseen real datapoints.
Therefore, \emph{zero-centered gradient penalty on samples from $p_r$ and $p_g$ only cannot help improving the generalization of the discriminator}. 

\emph{Non-zero centered GPs do not push an empirical discriminator toward $D^*$ because the gradient does not converge to $\bm 0$}. A commonly used non-zero centered GP is the one-centered GP (1-GP) \citep{wgangp} which has the following form
\begin{equation}
\lambda \mathbb{E}_{\tilde{\bm x}}[(\norm{(\nabla D)_{\tilde{\bm x}}} - 1)^2]
\end{equation} 
where $\tilde{\bm x} = \alpha \bm x + (1 - \alpha) \bm y$, $\bm x \sim p_r$, $\bm y \sim p_g$, and $\alpha \sim \mathcal{U}(0, 1)$. Although the initial goal of 1-GP was to enforce Lipschitz constraint on the discriminator \footnote{\cite{wganlp} pointed out that 1-GP is based on the wrong intuition that the gradient of the optimal critic must be 1 everywhere under $p_r$ and $p_g$. The corrected GP is based on the definition of Lipschitzness.}, \cite{manypatheq} found that 1-GP prevents gradient exploding, making the original GAN more stable.
1-GP forces the norm of gradients w.r.t. datapoints on the line segment connecting $\bm x$ and $\bm y$ to be 1. If all gradients on the line segment have norm $1$, then the line integral in Eqn. \ref{eqn:lineInt} could be as large as  $\norm{\bm x - \bm y}$. Because the distance between random samples grows with the dimensionality, in high dimensional space $\norm{\bm x - \bm y}$ is greater than 1 with high probability. The discriminator could maximize the value of the line integral without violating the Lipschitz constraint. The discriminator trained with 1-GP, therefore, can overfit to the training data and have poor generalization capability. 


\subsection{Convergence analysis for zero-centered gradient penalty}
\cite{whichGANConverge} showed that zero-centered GP on real and/or fake samples (0-GP-sample) makes GANs convergent. The penalty is based on the convergence analysis for the Dirac GAN, an 1-dimensional linear GAN which learns the Dirac distribution. The intuition is that when $p_g$ is the same as $p_r$, the gradient of the discriminator w.r.t. the fake datapoints (which are also real datapoints) should be $\bm 0$ so that generator will not move away when being updated using this gradient. If the gradient from the discriminator is not $\bm 0$, then the generator will oscillate around the equilibrium.

Our GP forces the gradient w.r.t. all datapoints on the line segment between a pair of samples (including the two endpoints) to be $\bm 0$. As a result, our GP also prevents the generator from oscillating. Therefore, our GP has the same convergence guarantee as the 0-GP-sample.
\subsection{Zero-centered gradient penalty improves capacity distribution}
Discriminators trained with the original GAN loss tends to focus on the region of the where fake samples are close to real samples, ignoring other regions. The phenomenon can be seen in Fig. \ref{fig:gradFieldNoGP}, \ref{fig:gradFieldNoGP10k}, \ref{fig:gradFieldNoGPTTUR}, \ref{fig:gradFieldNoGPTTUR10k}, \ref{fig:gradField1GP10k} and \ref{fig:gradField0real10k}. Gradients in the region where fake samples are concentrated are large while gradients in other regions, including regions where real samples are located, are very small. The generator cannot discover and generate real datapoints in regions where the gradient vanishes.

When trained with the objective in Eqn. \ref{eqn:l0gp}, the discriminator will have to balance between maximizing $\mathcal{L}$ and minimizing the GP. For finite $\lambda$, the GP term will not be exactly 0. Let $\gamma = \lambda \mathbb{E}_{\tilde{\bm x}}[\norm{(\nabla D)_{\tilde{\bm x}}}^2]$. Among discriminators with the same value of $\gamma$, gradient descent will find the discriminator that maximizes $\mathcal{L}$. As discussed in section \ref{gradExploding}, maximizing $\mathcal{L}$ leads to the maximization of norms of gradients on the path from $\bm y$ to $\bm x$. The discriminator should maximize the value
$ \eta = \mathbb{E}_{\tilde{\bm x}}[\norm{(\nabla D)_{\tilde{\bm x}}}] $.
If $\gamma$ is fixed then $\eta$ is maximized when $\norm{\nabla D_{\tilde{\bm x}^{(i)}}} = \norm{\nabla D_{\tilde{\bm x}^{(j)}}}, \forall \ i, j$ (Cauchy-Schwarz inequality). Therefore, our zero-centered GP encourages the gradients at different regions of the real data space to have the same norm. The capacity of $D$ is distributed more equally between regions of the real data space, effectively reduce mode collapse. The effect can be seen in Fig. \ref{fig:gradField0GP10k} and \ref{fig:gradField0GPTTUR10k}.

1-GP encourages $\abs{\norm{\nabla D_{\tilde{\bm x}^{(i)}}} - 1} = \abs{\norm{\nabla D_{\tilde{\bm x}^{(j)}}} - 1}, \forall \ i, j$. That allows gradient norms to be smaller than $1$ in some regions and larger than $1$ in some other regions. The problem can be seen in Fig. \ref{fig:gradField1GP10k}.
\section{Experiments}
The code is made available at \url{https://github.com/htt210/GeneralizationAndStabilityInGANs}.
\subsection{Zero-centered gradient penalty prevents overfitting}
To test the effectiveness of gradient penalties in preventing overfitting, we designed a dataset with real and fake samples coming from two Gaussian distributions and trained a MLP based discriminator on that dataset. The result is shown in Fig. \ref{fig:opD}. As predicted in section \ref{gradPenGeneralization}, 0-GP-sample does not help to improve generalization.
1-GP helps to improve generalization. The value surface in Fig. \ref{fig:opD1GP} is smoother than that in Fig. \ref{fig:opDNoGP}. However, as discussed in section \ref{gradPenGeneralization}, 1-GP cannot help much in higher dimensional space where the pair-wise distances are large. 
The discriminator trained with our 0-GP has the best generalization capability, with a value surface which is the most similar to that of the theoretically optimal one.

We increased the number of discriminator updates per generator update to $5$ to see the effect of GPs in preventing overfitting. On the MNIST dataset, GAN without GP and with other GPs cannot learn anything after 10,000 iterations. GAN with our 0-GP can still learn normally and start produce recognizable digits after only 1,000 iterations. 
\todo[inline]{Increasing the number of critic iteration to 20 makes WGAN-GP converge more slowly (Fig. \ref{fig:vswgangp} in appendix \ref{appx:result}). }
The result confirms that our GP is effective in preventing overfitting in the discriminator.

\subsection{Zero-centered gradient penalty improves generalization and robustness of GANs}
\subsubsection*{Synthetic data}
We tested different gradient penalties on a number of synthetic datasets to compare their effectiveness. The first dataset is a mixture of 8 Gaussians. 
The dataset is scaled up by a factor of 10 to simulate the situation in high dimensional space where random samples are far from each other. The result is shown in Fig. \ref{fig:gradField}. GANs with other gradient penalties all fail to learn the distribution and exhibit mode collapse problem to different extents. 
GAN with our 0-GP (GAN-0-GP) can successfully learn the distribution. Furthermore, GAN-0-GP can generate datapoints on the circle, demonstrating good generalization capability. The original GAN collapses to some disconnected modes and cannot perform smooth interpolation between modes: small change in the input result in large, unpredictable change in the output.  GAN with zero-centered GP on real/fake samples only also exhibits the same "mode jumping" behavior. The behavior suggests that these GANs tend to remember the training dataset and have poor generalization capability. Fig. \ref{fig:interpolation} in appendix \ref{appx:result} demonstrates the problem on MNIST dataset.

We observe that GAN-0-GP behaves similar to Wasserstein GAN as it first learns the overall structure of the distribution and then focuses on the modes. An evolution sequence of GAN-0-GP is shown in Fig. \ref{fig:0gpEvolution} in appendix \ref{appx:result}. Results on other synthetic datasets are shown in appendix \ref{appx:result}.

\subsubsection*{MNIST dataset}
The result on MNIST dataset is shown in Fig. \ref{fig:mnist}. After 1,000 iterations, all other GANs exhibit mode collapse or cannot learn anything. GAN-0-GP is robust to changes in hyper parameters such as learning rate and optimizers. When Adam is initialized with large $\beta_1$, e.g. $0.9$, GANs with other GPs cannot learn anything after many iterations. 
More samples are given in appendix \ref{appx:result}.

We observe that higher value of $\lambda$ improves the diversity of generated samples. For $\lambda = 50$, we observe some similar looking samples in the generated data. This is consistent with our conjecture that larger $\lambda$ leads to better generalization. 

\subsubsection*{ImageNet}
When trained on ImangeNet \citep{imagenet}, GAN-0-GP can produce high quality samples from all 1,000 classes. We compared our method with GAN with 0-GP-sample and WGAN-GP. GAN-0-GP-sample is able to produce samples of state of the art quality without using progressive growing trick \citep{progressiveGAN}. The result in Fig. \ref{fig:imagenet} shows that our method consistently outperforms GAN-0-GP-sample. GAN-0-GP and GAN-0-GP-sample outperform WGAN-GP by a large margin. Image samples are given in appendix \ref{appx:result}.

\section{Conclusion}
In this paper, we clarify the reason behind the poor generalization capability of GAN. We show that the original GAN loss does not guide the discriminator and the generator toward a generalizable equilibrium. We propose a zero-centered gradient penalty which pushes empirical discriminators toward the optimal discriminator with good generalization capability. Our gradient penalty provides better generalization and convergence guarantee than other gradient penalties. Experiments on diverse datasets verify that our method significantly improves the generalization and stability of GANs.

\begin{figure}
\centering
\subfloat[] {\includegraphics[width=0.18\textwidth]{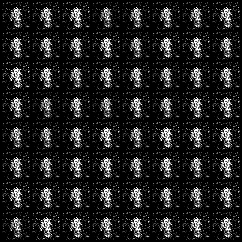} \label{fig:noGPMNIST}}
\subfloat[] {\includegraphics[width=0.18\textwidth]{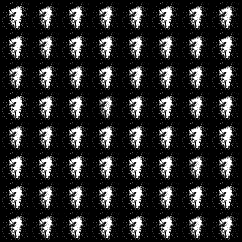} \label{fig:0realMNIST}}
\subfloat[] {\includegraphics[width=0.18\textwidth]{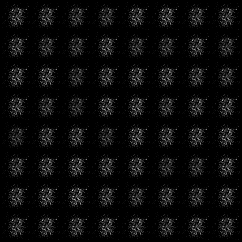} \label{fig:1GPMNIST}}
\subfloat[] {\includegraphics[width=0.18\textwidth]{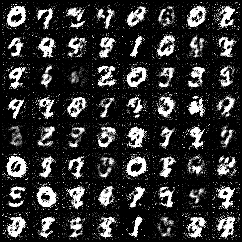} \label{fig:0GPMNIST}}
\subfloat[] {\includegraphics[width=0.18\textwidth]{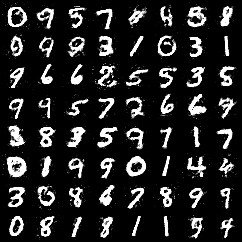} \label{fig:0GPMNIST10k}}

\caption{Result on MNIST. The networks have the same architectures with networks used in synthetic experiment. Batch normalization \citep{batchnorm} was not used. Adam optimizer \citep{adam} with $\beta_1 = 0.5, \beta_2 = 0.9$ was used.
\protect\subref{fig:noGPMNIST} No GP, iter. 1,000.
\protect\subref{fig:0realMNIST} 0-GP-sample, $\lambda = 100$, iter. 1,000.
\protect\subref{fig:1GPMNIST} 1-GP, $\lambda = 100$, iter. 1,000.
\protect\subref{fig:0GPMNIST},
\protect\subref{fig:0GPMNIST10k} 0-GP, $\lambda = 100$, iter. 1,000 and 10,000.
}
\label{fig:mnist}
\end{figure}

\begin{figure}
\centering
\includegraphics[width=.5\textwidth]{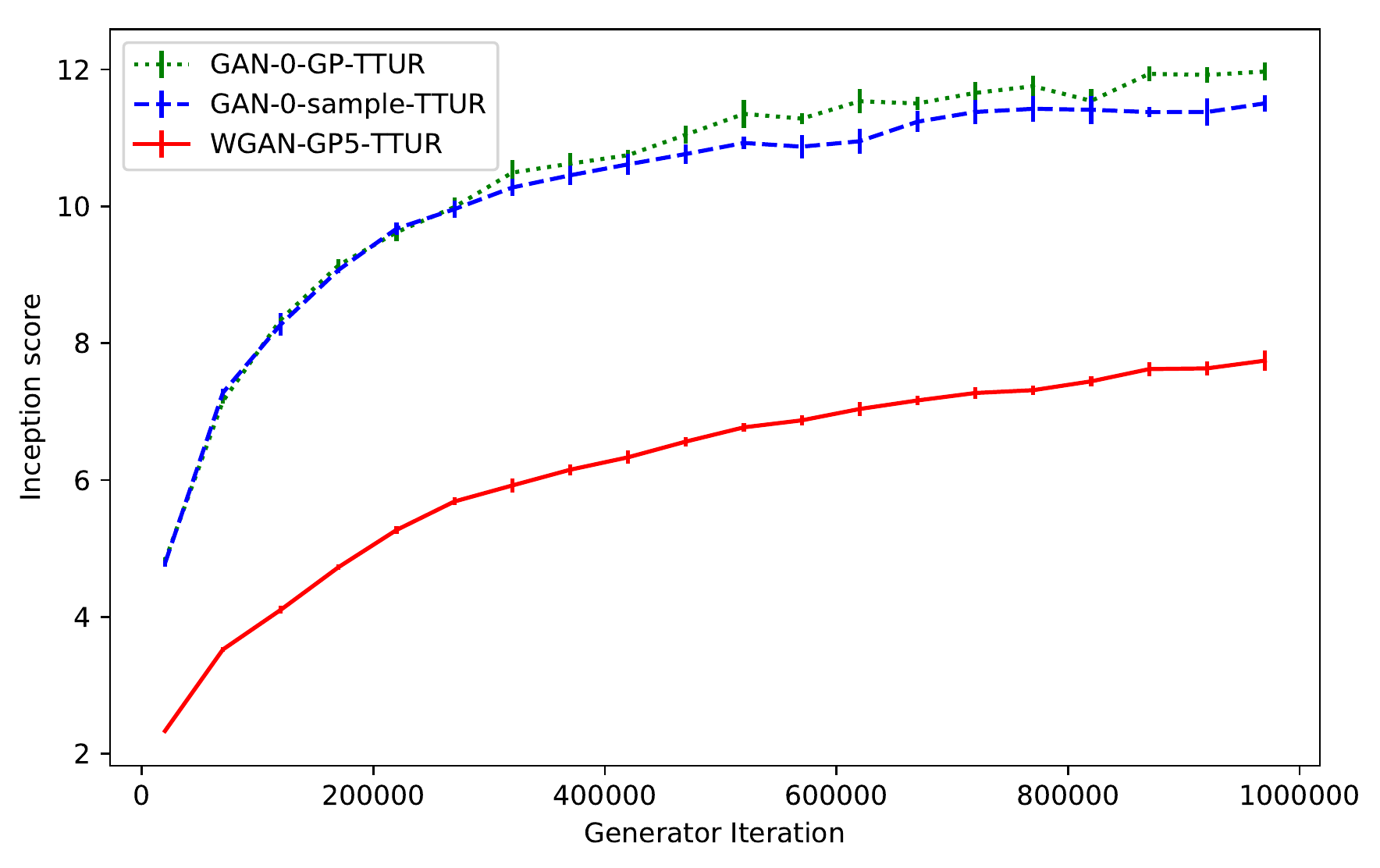}
\caption{Inception score \citep{improvedGAN} on ImageNet of GAN-0-GP, GAN-0-GP-sample, and WGAN-GP. The code for this experiment is adapted from \cite{whichGANConverge}. We used $\lambda=10$ for all GANs as recommended by Mescheder et al. The critic in WGAN-GP was updated 5 times per generator update. To improve convergence, we used TTUR with learning rates of 0.0001 and 0.0003 for the generator and discriminator, respectively.}
\label{fig:imagenet}
\end{figure}

\bibliography{hgan}

\begin{thebibliography}{27}
\providecommand{\natexlab}[1]{#1}
\providecommand{\url}[1]{\texttt{#1}}
\expandafter\ifx\csname urlstyle\endcsname\relax
  \providecommand{\doi}[1]{doi: #1}\else
  \providecommand{\doi}{doi: \begingroup \urlstyle{rm}\Url}\fi

\bibitem[{Arjovsky} \& {Bottou}(2017){Arjovsky} and
  {Bottou}]{towardPrincipledGAN}
M.~{Arjovsky} and L.~{Bottou}.
\newblock {Towards Principled Methods for Training Generative Adversarial
  Networks}.
\newblock \emph{ArXiv e-prints}, January 2017.

\bibitem[Arjovsky et~al.(2017)Arjovsky, Chintala, and Bottou]{wgan}
Martin Arjovsky, Soumith Chintala, and L{\'e}on Bottou.
\newblock {W}asserstein generative adversarial networks.
\newblock In Doina Precup and Yee~Whye Teh (eds.), \emph{Proceedings of the
  34th International Conference on Machine Learning}, volume~70 of
  \emph{Proceedings of Machine Learning Research}, pp.\  214--223,
  International Convention Centre, Sydney, Australia, 06--11 Aug 2017. PMLR.

\bibitem[Arora et~al.(2017)Arora, Ge, Liang, Ma, and
  Zhang]{equiAndGeneralization}
Sanjeev Arora, Rong Ge, Yingyu Liang, Tengyu Ma, and Yi~Zhang.
\newblock Generalization and equilibrium in generative adversarial nets
  ({GAN}s).
\newblock In Doina Precup and Yee~Whye Teh (eds.), \emph{Proceedings of the
  34th International Conference on Machine Learning}, volume~70 of
  \emph{Proceedings of Machine Learning Research}, pp.\  224--232,
  International Convention Centre, Sydney, Australia, 06--11 Aug 2017. PMLR.

\bibitem[Arora et~al.(2018)Arora, Risteski, and Zhang]{doGANLearnDist}
Sanjeev Arora, Andrej Risteski, and Yi~Zhang.
\newblock Do {GAN}s learn the distribution? some theory and empirics.
\newblock In \emph{International Conference on Learning Representations}, 2018.

\bibitem[Deng et~al.(2009)Deng, Dong, Socher, Li, Li, and Fei-Fei]{imagenet}
J.~Deng, W.~Dong, R.~Socher, L.-J. Li, K.~Li, and L.~Fei-Fei.
\newblock {ImageNet: A Large-Scale Hierarchical Image Database}.
\newblock In \emph{CVPR09}, 2009.

\bibitem[Fedus et~al.(2018)Fedus, Rosca, Lakshminarayanan, Dai, Mohamed, and
  Goodfellow]{manypatheq}
William Fedus, Mihaela Rosca, Balaji Lakshminarayanan, Andrew~M. Dai, Shakir
  Mohamed, and Ian Goodfellow.
\newblock Many paths to equilibrium: {GAN}s do not need to decrease a
  divergence at every step.
\newblock In \emph{International Conference on Learning Representations}, 2018.

\bibitem[Goodfellow et~al.(2014)Goodfellow, Pouget-Abadie, Mirza, Xu,
  Warde-Farley, Ozair, Courville, and Bengio]{gan}
Ian Goodfellow, Jean Pouget-Abadie, Mehdi Mirza, Bing Xu, David Warde-Farley,
  Sherjil Ozair, Aaron Courville, and Yoshua Bengio.
\newblock Generative adversarial nets.
\newblock In Z.~Ghahramani, M.~Welling, C.~Cortes, N.~D. Lawrence, and K.~Q.
  Weinberger (eds.), \emph{Advances in Neural Information Processing Systems
  27}, pp.\  2672--2680. Curran Associates, Inc., 2014.

\bibitem[Gulrajani et~al.(2017)Gulrajani, Ahmed, Arjovsky, Dumoulin, and
  Courville]{wgangp}
Ishaan Gulrajani, Faruk Ahmed, Martin Arjovsky, Vincent Dumoulin, and Aaron~C
  Courville.
\newblock Improved training of wasserstein gans.
\newblock In I.~Guyon, U.~V. Luxburg, S.~Bengio, H.~Wallach, R.~Fergus,
  S.~Vishwanathan, and R.~Garnett (eds.), \emph{Advances in Neural Information
  Processing Systems 30}, pp.\  5767--5777. Curran Associates, Inc., 2017.

\bibitem[Heusel et~al.(2017)Heusel, Ramsauer, Unterthiner, Nessler, and
  Hochreiter]{ttur}
Martin Heusel, Hubert Ramsauer, Thomas Unterthiner, Bernhard Nessler, and Sepp
  Hochreiter.
\newblock Gans trained by a two time-scale update rule converge to a local nash
  equilibrium.
\newblock In I.~Guyon, U.~V. Luxburg, S.~Bengio, H.~Wallach, R.~Fergus,
  S.~Vishwanathan, and R.~Garnett (eds.), \emph{Advances in Neural Information
  Processing Systems 30}, pp.\  6626--6637. Curran Associates, Inc., 2017.

\bibitem[Ioffe \& Szegedy(2015)Ioffe and Szegedy]{batchnorm}
Sergey Ioffe and Christian Szegedy.
\newblock Batch normalization: Accelerating deep network training by reducing
  internal covariate shift.
\newblock In Francis Bach and David Blei (eds.), \emph{Proceedings of the 32nd
  International Conference on Machine Learning}, volume~37 of \emph{Proceedings
  of Machine Learning Research}, pp.\  448--456, Lille, France, 07--09 Jul
  2015. PMLR.

\bibitem[Karras et~al.(2018)Karras, Aila, Laine, and Lehtinen]{progressiveGAN}
Tero Karras, Timo Aila, Samuli Laine, and Jaakko Lehtinen.
\newblock Progressive growing of {GAN}s for improved quality, stability, and
  variation.
\newblock In \emph{International Conference on Learning Representations}, 2018.

\bibitem[Kingma \& Ba(2014)Kingma and Ba]{adam}
Diederik~P. Kingma and Jimmy Ba.
\newblock Adam: {A} method for stochastic optimization.
\newblock \emph{CoRR}, abs/1412.6980, 2014.

\bibitem[Lucas et~al.(2018)Lucas, Tallec, Ollivier, and
  Verbeek]{mixedSymmetricGAN}
Thomas Lucas, Corentin Tallec, Yann Ollivier, and Jakob Verbeek.
\newblock Mixed batches and symmetric discriminators for {GAN} training.
\newblock In Jennifer Dy and Andreas Krause (eds.), \emph{Proceedings of the
  35th International Conference on Machine Learning}, volume~80 of
  \emph{Proceedings of Machine Learning Research}, pp.\  2844--2853,
  Stockholmsmässan, Stockholm Sweden, 10--15 Jul 2018. PMLR.

\bibitem[Mescheder et~al.(2018)Mescheder, Geiger, and
  Nowozin]{whichGANConverge}
Lars Mescheder, Andreas Geiger, and Sebastian Nowozin.
\newblock Which training methods for {GAN}s do actually converge?
\newblock In Jennifer Dy and Andreas Krause (eds.), \emph{Proceedings of the
  35th International Conference on Machine Learning}, volume~80 of
  \emph{Proceedings of Machine Learning Research}, pp.\  3478--3487,
  Stockholmsmässan, Stockholm Sweden, 10--15 Jul 2018. PMLR.

\bibitem[Miyato et~al.(2018)Miyato, Kataoka, Koyama, and Yoshida]{spectralNorm}
Takeru Miyato, Toshiki Kataoka, Masanori Koyama, and Yuichi Yoshida.
\newblock Spectral normalization for generative adversarial networks.
\newblock In \emph{International Conference on Learning Representations}, 2018.

\bibitem[Mroueh \& Sercu(2017)Mroueh and Sercu]{fishergan}
Youssef Mroueh and Tom Sercu.
\newblock Fisher gan.
\newblock In I.~Guyon, U.~V. Luxburg, S.~Bengio, H.~Wallach, R.~Fergus,
  S.~Vishwanathan, and R.~Garnett (eds.), \emph{Advances in Neural Information
  Processing Systems 30}, pp.\  2513--2523. Curran Associates, Inc., 2017.

\bibitem[Mroueh et~al.(2018)Mroueh, Li, Sercu, Raj, and Cheng]{sobolevgan}
Youssef Mroueh, Chun-Liang Li, Tom Sercu, Anant Raj, and Yu~Cheng.
\newblock Sobolev {GAN}.
\newblock In \emph{International Conference on Learning Representations}, 2018.

\bibitem[Nagarajan \& Kolter(2017)Nagarajan and Kolter]{ganStable}
Vaishnavh Nagarajan and J.~Zico Kolter.
\newblock Gradient descent gan optimization is locally stable.
\newblock In I.~Guyon, U.~V. Luxburg, S.~Bengio, H.~Wallach, R.~Fergus,
  S.~Vishwanathan, and R.~Garnett (eds.), \emph{Advances in Neural Information
  Processing Systems 30}, pp.\  5585--5595. Curran Associates, Inc., 2017.

\bibitem[Paszke et~al.(2017)Paszke, Gross, Chintala, Chanan, Yang, DeVito, Lin,
  Desmaison, Antiga, and Lerer]{pytorch}
Adam Paszke, Sam Gross, Soumith Chintala, Gregory Chanan, Edward Yang, Zachary
  DeVito, Zeming Lin, Alban Desmaison, Luca Antiga, and Adam Lerer.
\newblock Automatic differentiation in pytorch.
\newblock 2017.

\bibitem[Petzka et~al.(2018)Petzka, Fischer, and Lukovnikov]{wganlp}
Henning Petzka, Asja Fischer, and Denis Lukovnikov.
\newblock On the regularization of wasserstein {GAN}s.
\newblock In \emph{International Conference on Learning Representations}, 2018.

\bibitem[{Qi}(2017)]{lsgan}
G.-J. {Qi}.
\newblock {Loss-Sensitive Generative Adversarial Networks on Lipschitz
  Densities}.
\newblock \emph{ArXiv e-prints}, January 2017.

\bibitem[Radford et~al.(2015)Radford, Metz, and Chintala]{dcgan}
Alec Radford, Luke Metz, and Soumith Chintala.
\newblock Unsupervised representation learning with deep convolutional
  generative adversarial networks.
\newblock \emph{CoRR}, abs/1511.06434, 2015.

\bibitem[Roth et~al.(2017)Roth, Lucchi, Nowozin, and Hofmann]{stabilizingGAN}
Kevin Roth, Aurelien Lucchi, Sebastian Nowozin, and Thomas Hofmann.
\newblock Stabilizing training of generative adversarial networks through
  regularization.
\newblock In I.~Guyon, U.~V. Luxburg, S.~Bengio, H.~Wallach, R.~Fergus,
  S.~Vishwanathan, and R.~Garnett (eds.), \emph{Advances in Neural Information
  Processing Systems 30}, pp.\  2018--2028. Curran Associates, Inc., 2017.

\bibitem[Salimans et~al.(2016)Salimans, Goodfellow, Zaremba, Cheung, Radford,
  Chen, and Chen]{improvedGAN}
Tim Salimans, Ian Goodfellow, Wojciech Zaremba, Vicki Cheung, Alec Radford,
  Xi~Chen, and Xi~Chen.
\newblock Improved techniques for training gans.
\newblock In D.~D. Lee, M.~Sugiyama, U.~V. Luxburg, I.~Guyon, and R.~Garnett
  (eds.), \emph{Advances in Neural Information Processing Systems 29}, pp.\
  2234--2242. Curran Associates, Inc., 2016.

\bibitem[Srivastava et~al.(2017)Srivastava, Valkoz, Russell, Gutmann, and
  Sutton]{veegan}
Akash Srivastava, Lazar Valkoz, Chris Russell, Michael~U. Gutmann, and Charles
  Sutton.
\newblock Veegan: Reducing mode collapse in gans using implicit variational
  learning.
\newblock In I.~Guyon, U.~V. Luxburg, S.~Bengio, H.~Wallach, R.~Fergus,
  S.~Vishwanathan, and R.~Garnett (eds.), \emph{Advances in Neural Information
  Processing Systems 30}, pp.\  3308--3318. Curran Associates, Inc., 2017.

\bibitem[Wu et~al.(2018)Wu, Huang, Thoma, Acharya, and
  Van~Gool]{wassersteinDiv}
Jiqing Wu, Zhiwu Huang, Janine Thoma, Dinesh Acharya, and Luc Van~Gool.
\newblock Wasserstein divergence for gans.
\newblock In Vittorio Ferrari, Martial Hebert, Cristian Sminchisescu, and Yair
  Weiss (eds.), \emph{Computer Vision -- ECCV 2018}, pp.\  673--688, Cham,
  2018. Springer International Publishing.
\newblock ISBN 978-3-030-01228-1.

\bibitem[Zhang et~al.(2018)Zhang, Liu, Zhou, Xu, and He]{disGenTradoff}
Pengchuan Zhang, Qiang Liu, Dengyong Zhou, Tao Xu, and Xiaodong He.
\newblock On the discrimination-generalization tradeoff in {GAN}s.
\newblock In \emph{International Conference on Learning Representations}, 2018.

\end{thebibliography}
\bibliographystyle{iclr2019_conference}

\appendix
\section{Proof for Proposition \ref{prop:disjoint}}
\label{appx:disjoint}
For continuous random variable $\bm V$, $\mathbb{P}(\bm  V= \bm v) = 0$ for any $\bm v$. The probability of finding a noise vector $\bm z$ such that $G(\bm z)$ is exactly equal to a real datapoint $\bm x \in \mathcal{D}_r$ via random sampling is 0. Therefore, the probability of a real datapoint $\bm x_i$ being in the fake dataset $\mathcal{D}_g$ is 0. Similarly, the probability of any fake datapoint being in the real dataset is 0. 
\begin{eqnarray}
\mathbb{P}(\bm x \in \mathcal{D}_g^{(t)}) & = & 0, \forall \bm x \in \mathcal{D}_r, t \in \mathbb{N} \nonumber \\
\mathbb{P}(\bm y \in \mathcal{D}_r) & = & 0, \forall \bm y \in \mathcal{D}_g^{(t)}, t \in \mathbb{N} \nonumber\\
\mathbb{P}(\mathcal{D}_r \cap \mathcal{D}_g^{(t)} = \emptyset) & = & 1, \forall t \in \mathbb{N} \label{eqn:disjoint}
\end{eqnarray}
Furthermore, due to the curse of dimensionality, the probability of sampling a datapoint which is close to another datapoint in high dimensional space also decrease exponentially. The distances between datapoints are larger in higher dimensional space. That suggests that it is easier to separate $\mathcal{D}_r$ and $\mathcal{D}_g^{(t)}$ in higher dimensional space.
\section{Constructing $\epsilon$-optimal discriminators}
\label{appx:constructOptimalD}

To make the construction process simpler, let's assume that samples are normalized:
\[ \norm{\bm x_i} = \norm{\bm y_j} = 1, \forall \bm x_i \in \mathcal{D}_r, \bm y_j \in \mathcal{D}_g \]
Let's use the following new notations for real and fake samples:
\begin{eqnarray*}
\mathcal{D} & = & \mathcal{D}_r \cup \mathcal{D}_g = \left\lbrace \bm v_1, ..., \bm v_{m + n} \right\rbrace \\
\bm v_i & = & 
\begin{cases}
\bm x_i \text{, for } i = 1, ..., n \\
\bm y_{i - n} \text{, for } i = n + 1, ..., n + m
\end{cases}
\end{eqnarray*}
We construct the $\epsilon$-optimal discriminator $D$ as a MLP with 1 hidden layer. Let $\bm W_1 \in \mathbb{R}^{(m + n) \times d_x}$ and $\bm W_2 \in \mathbb{R}^{m + n}$ be the weight matrices of $D$. The total number of parameters in $D$ is $d_x(m + n) + (m + n) = \mathcal{O}(d_x(m + n))$. We set the value of $\bm W_1$ as
\begin{eqnarray*}
\bm W_1 & = & k 
\left[
\begin{matrix}
\bm v_1^\top \\
\vdots \\
\bm v_{n + m}^\top
\end{matrix}
\right] 
\end{eqnarray*}
and $\bm W_2$ as
\begin{eqnarray*}
W_{2, i} & = & \frac{1}{2} + \frac{\epsilon}{2} + \alpha \text{, for } i = 1, ..., n \\
W_{2, i} & = & \frac{1}{2} - \frac{\epsilon}{2} - \alpha \text{, for } i = n + 1, ..., n + m \\
\alpha & > & 0
\end{eqnarray*}

Given an input $\bm v \in \mathcal{D}$, the output is computed as:
\[ D(\bm v) = \bm W_2^\top \sigma(\bm W_1 \bm v) \]
where $\sigma$ is the softmax function. Let $\bm a = \bm W_1 \bm v$, we have
\[ a_i = k \bm v_i^\top \bm v = 
\begin{cases}
k, & \text{ if } \bm v = \bm v_i \\
< k, & \text{ if } \bm v \neq \bm v_i
\end{cases}
 \]

As $k \rightarrow \infty$, $\sigma(\bm W_1 \bm v_i)$ becomes a one-hot vector with the $i$-th element being 1, all other elements being 0. Thus, for large enough $k$, for any $\bm v_j \in \mathcal{D}$, the output of the network is
\[ D(\bm v_j) = \bm W_2^\top \sigma(\bm W_1 \bm v_j) \approx W_{2, j} = 
\begin{cases}
> \frac{1}{2} + \frac{\epsilon}{2}, \text{ for } j = 1, ..., n \\
< \frac{1}{2} - \frac{\epsilon}{2}, \text{ for } j = n + 1, ..., n + m
\end{cases}
\]
$D$ is a $\epsilon$-optimal discriminator for dataset $\mathcal{D}$.

\section{Fixed-Alt-GD cannot maintain the optimality of $\epsilon$-discriminators}
\label{appx:maintainOptimal}
Let's consider the case where the real and the fake dataset each contain a single datapoint $\mathcal{D}_r = \left\lbrace \bm x \right\rbrace$, $\mathcal{D}_g^{(t)} = \bm y^{(t)}$, and the discriminator and the generator are linear:
\begin{eqnarray*}
D(\bm v) & = & \bm \theta_D^\top \bm v \\
G(\bm z) & = & \bm \theta_G \bm z
\end{eqnarray*}
and the objective is also linear (Wasserstein GAN's objective):
\begin{eqnarray}
 \mathcal{L_{\mathcal{W}}} & = & \mathbb{E}_{\bm x \in \mathcal{D}_r}[D(\bm x)] - \mathbb{E}_{\bm y \in \mathcal{D}_g^{(t)}}[D(\bm y)] \nonumber \\
 & = & D(\bm x) - D(\bm y^{(t)}) \nonumber 
\end{eqnarray}
The same learning rate $\alpha$ is used for $D$ and $G$.

At step $t$, the discriminator is $\epsilon$-optimal
\begin{eqnarray}
D(\bm x) - D(\bm y^{(t)}) & = & \bm \theta_D^\top \left( \bm x - \bm y^{(t)} \right) \ge \epsilon  \label{eqn:linearEpsOptimal} \\
\norm{\bm \theta_D} & \ge & \frac{\epsilon}{\norm{\bm x - \bm y^{(t)}}} \label{eqn:thetaNormLowerBound}
\end{eqnarray}
The gradients w.r.t. $\bm \theta_D$ and $\bm \theta_G$ are
\begin{eqnarray}
\frac{\partial \mathcal{L}}{\partial \bm \theta_D} & = & \bm x - \bm y^{(t)} \\
\frac{\partial \mathcal{L}}{\partial \bm \theta_G} & = & \frac{\partial \mathcal{L}}{\partial \bm y^{(t)}} \times \bm z^\top \nonumber \\
& = & \bm \theta_D \times \bm z^\top \label{eqn:gradThetaG}
\end{eqnarray}
If the learning rate $\alpha$ is small enough, $\norm{\bm x - \bm y^{(t)}}$ should decrease as $t$ increases. As the empirical fake distribution converges to the empirical real distribution, $\norm{\bm x - \bm y^{(t)}} \rightarrow 0$. The norm of gradient w.r.t. $\bm \theta_D$, therefore, decreases as $t$ increases and vanishes when the two empirical distributions are the same. 
From Eqn. \ref{eqn:thetaNormLowerBound}, we see that, in order to maintain $D$'s $\epsilon$-optimality when $\norm{\bm x - \bm y^{(t)}}$ decreases, $\norm{\bm \theta_D}$ has to increase. 
From Eqn. \ref{eqn:thetaNormLowerBound} and \ref{eqn:gradThetaG}, we see that the gradient w.r.t. $\bm \theta_G$ grows as the two empirical distributions are more similar. As $\norm{\bm x - \bm y^{(t)}} \rightarrow 0$,
\begin{eqnarray}
\frac{\norm{\frac{\partial \mathcal{L}_{\mathcal{W}}}{\partial \bm \theta_D}}}{\norm{\frac{\partial \mathcal{L}_{\mathcal{W}}}{\partial \bm \theta_G}}} \rightarrow 0
\end{eqnarray}

Because the same learning rate $\alpha$ is used for both $G$ and $D$, $G$ will learn much faster than $D$. Furthermore, because $\norm{\bm x - \bm y^{(t)}}$ decreases as $t $ increases, the difference
\[ \frac{\epsilon}{\norm{\bm x - \bm y^{(t + 1)}}} - \frac{\epsilon}{\norm{\bm x - \bm y^{(t)}}} \]
increases with $t$. The number of gradient steps that $D$ has to take to reach the next $\epsilon$-optimal state increases, and goes to infinity as $\norm{\bm x - \bm y^{(t)}} \rightarrow 0$. Therefore, gradient descent with fixed number of updates to $\bm \theta_D$ cannot maintain the optimality of $D$.

The derivation for the objective in Eqn. \ref{eqn:ganloss} is similar.


\section{Results on different datasets}\label{appx:result}

\begin{figure}[h!]
\centering
\subfloat[Iter. 0] {\includegraphics[width=0.19\textwidth]{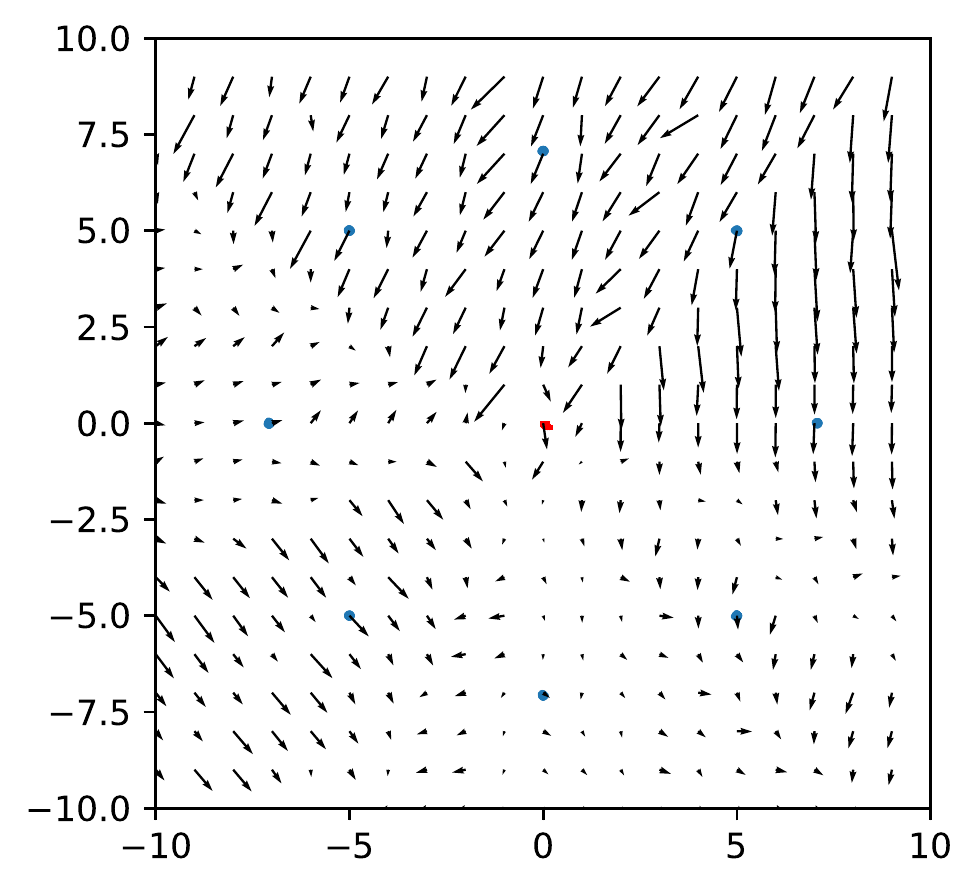}}
\subfloat[Iter. 1,000] {\includegraphics[width=0.19\textwidth]{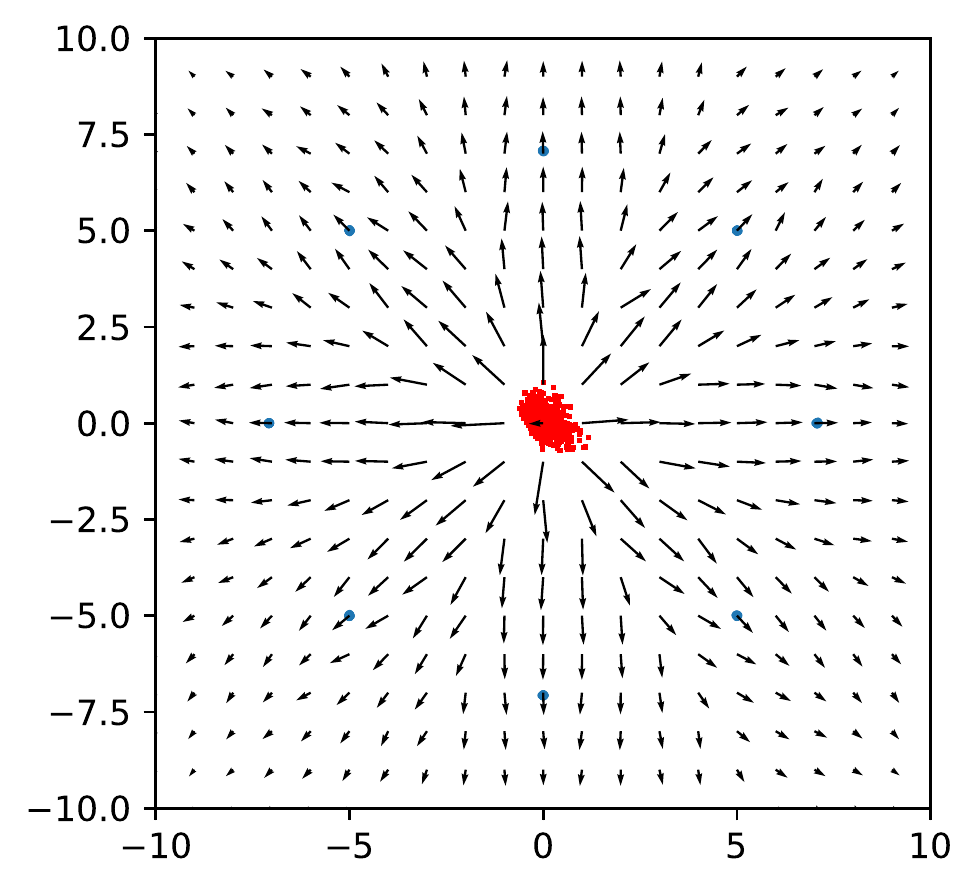}}
\subfloat[Iter. 2,000] {\includegraphics[width=0.19\textwidth]{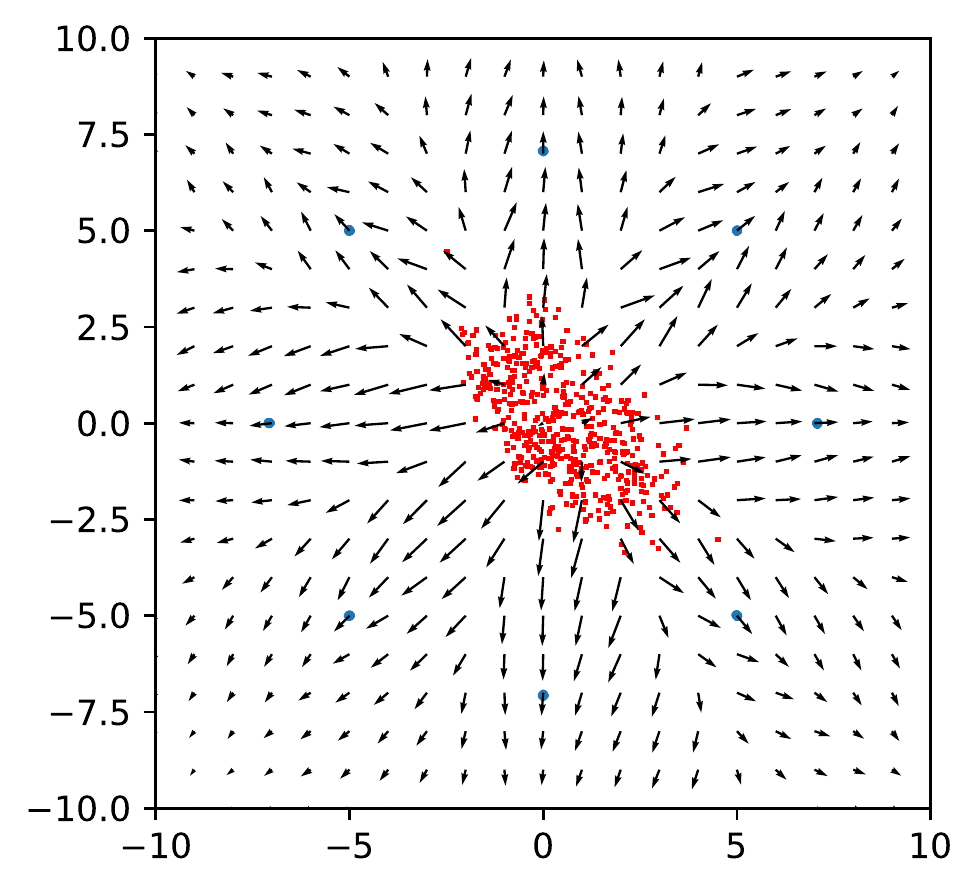}}
\subfloat[Iter. 5,000] {\includegraphics[width=0.19\textwidth]{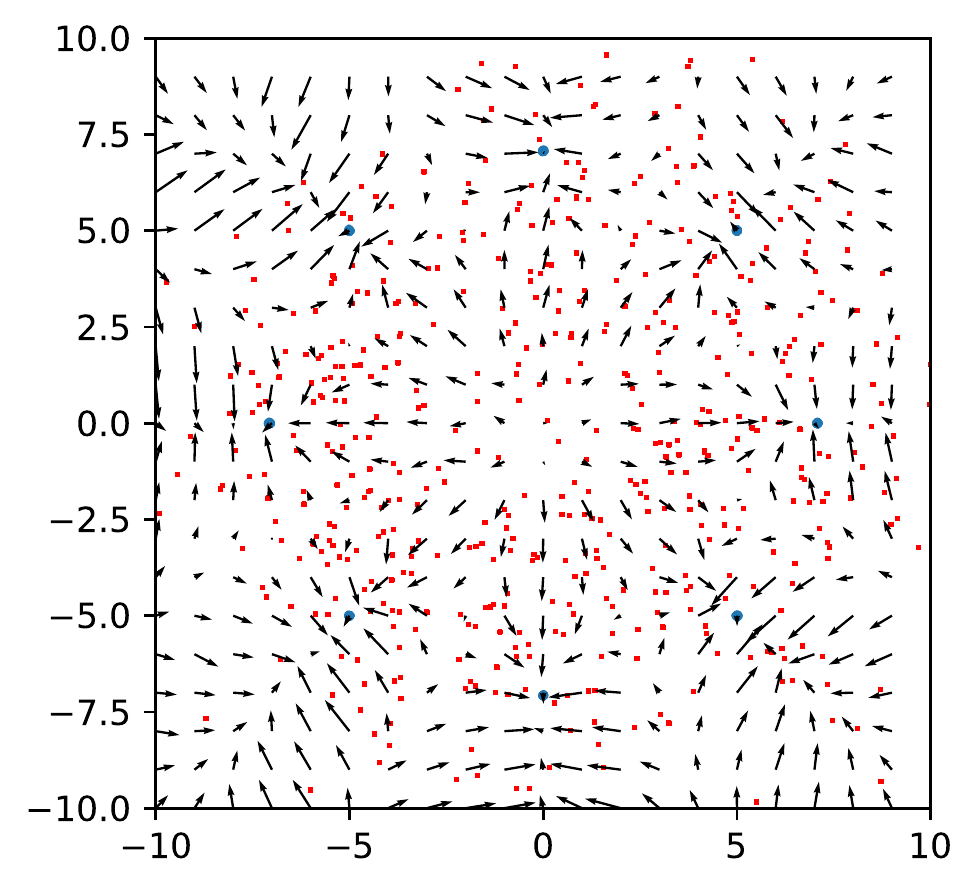}}
\subfloat[Iter. 20,000] {\includegraphics[width=0.19\textwidth]{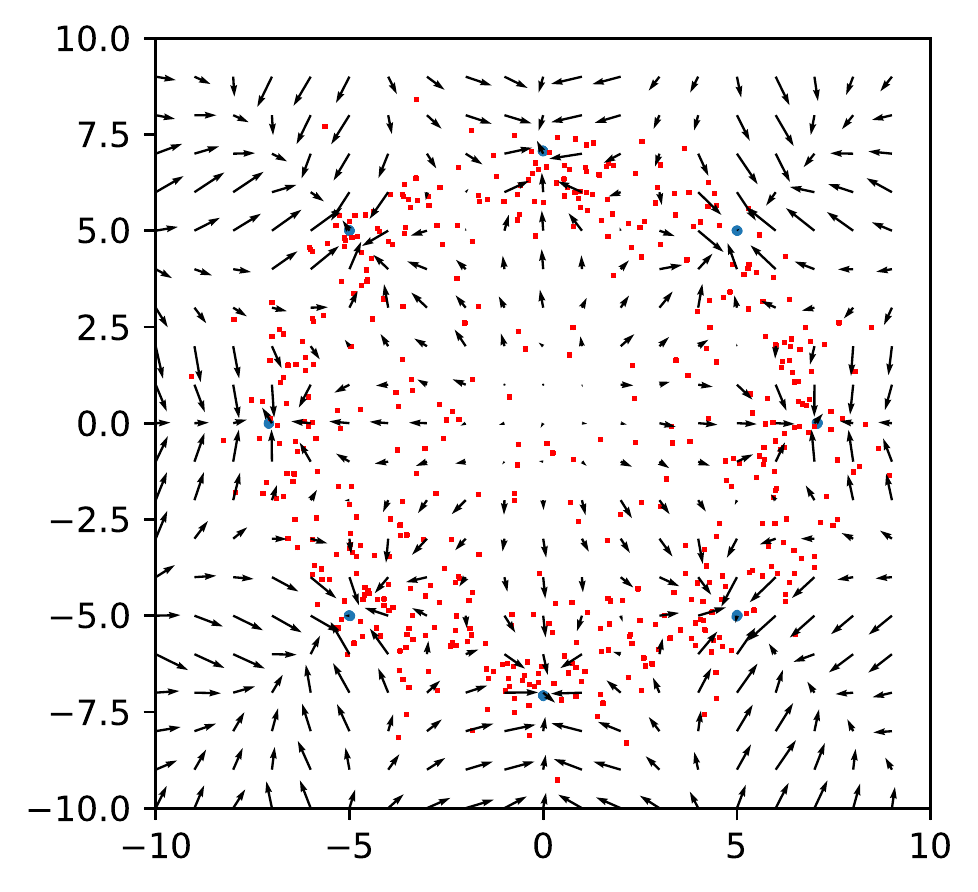}}
\caption{Evolution of GAN-0-GP with $\lambda = 100$ on 8 Gaussians dataset.}
\label{fig:0gpEvolution}
\end{figure}

\begin{figure}[h!]
\centering
\subfloat[GAN-0-GP] {\includegraphics[width=0.19\textwidth]{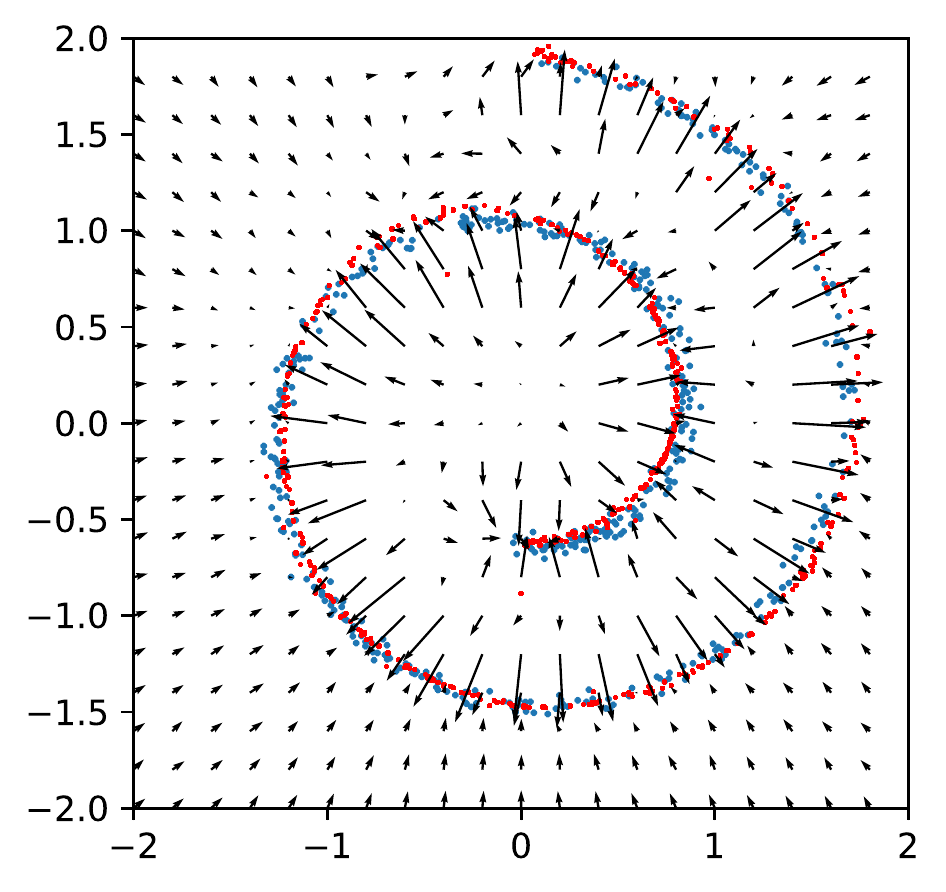}}
\subfloat[GAN-1-GP] {\includegraphics[width=0.19\textwidth]{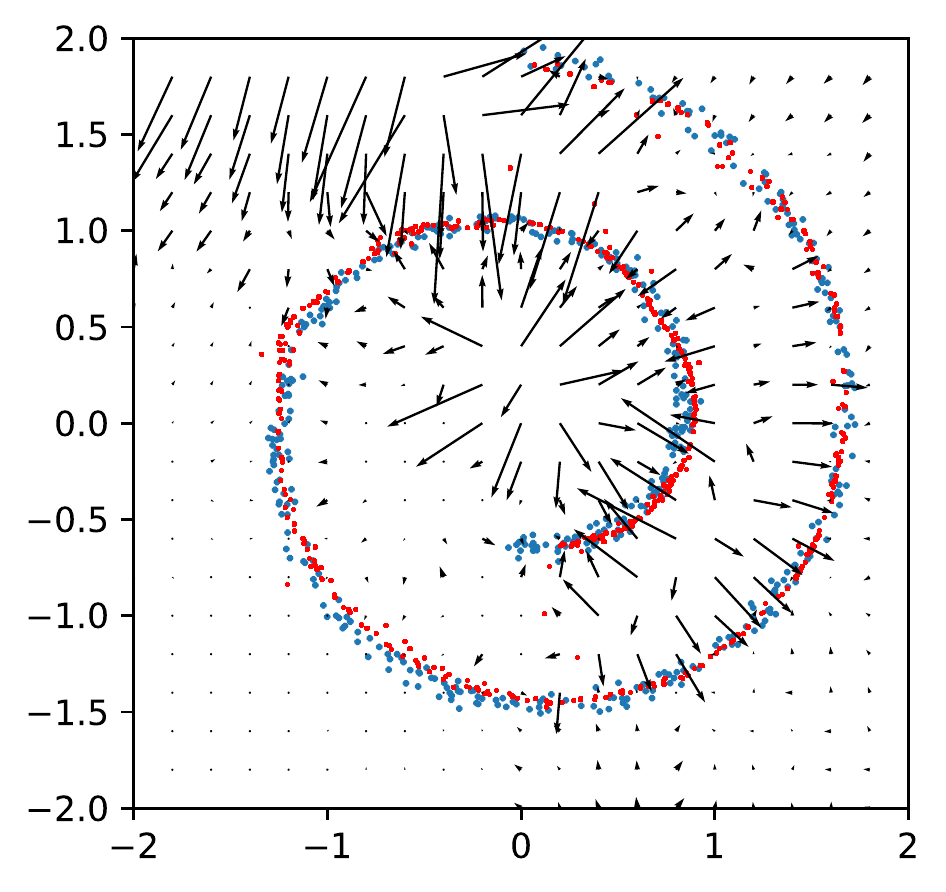}}

\caption{GANs trained with different gradient penalty on swissroll dataset. Although GAN-1-GP is able to learn the distribution, the gradient field has bad pattern. GAN-1-GP is more sensitive to change in hyper parameters and optimizers. GAN-1-GP fails to learn the scaled up version of the distribution.}
\end{figure}

\begin{figure}[h!]
\centering
\subfloat[] {\includegraphics[width=0.19\textwidth]{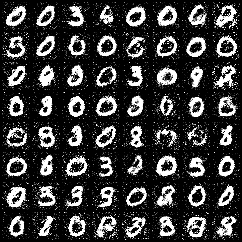} \label{fig:noGPMNIST10k}}
\subfloat[] {\includegraphics[width=0.19\textwidth]{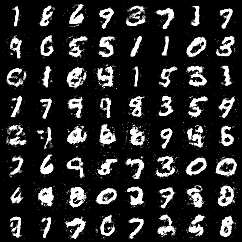} \label{fig:0realMNIST10k}}
\subfloat[] {\includegraphics[width=0.19\textwidth]{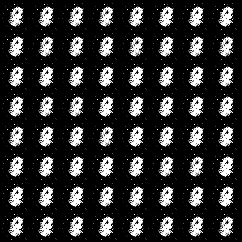} \label{fig:1GPMNIST10k}}
\subfloat[] {\includegraphics[width=0.19\textwidth]{figs/gan_mnist_center_0.00_alpha_None_lambda_100.00_lrg_0.00030_lrd_0.00030_nhidden_512_nz_50_ncritic_1/mnist_iter_010000.png} \label{fig:0GPMNIST10kappx}}

\caption{Result on MNIST. Adam was initialized with $\beta_1 = 0.5, \beta_2 = 0.9$. 
\protect\subref{fig:noGPMNIST10k} No GP, iteration 10,000.
\protect\subref{fig:0realMNIST10k} Zero-centered GP on real samples only with $\lambda = 100$, iteration 10,000.
\protect\subref{fig:1GPMNIST10k} One-centered GP with $\lambda = 100$, iteration 10,000.
\protect\subref{fig:0GPMNIST10kappx} Our zero-centered GP with $\lambda = 100$, iteration 10,000.
}
\label{fig:mnist10kappx}
\end{figure}

\begin{figure}[h!]
\centering
\subfloat[] {\includegraphics[width=0.19\textwidth]{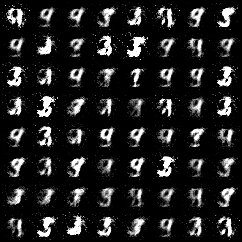} \label{fig:largeAdam0realMNIST10k}}
\subfloat[] {\includegraphics[width=0.19\textwidth]{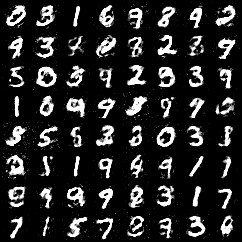} \label{fig:largeAdam0GPMNIST10k}}

\caption{Result on MNIST. Adam was initialized with $\beta_1 = 0.9, \beta_2 = 0.99$. 
\protect\subref{fig:largeAdam0realMNIST10k} Zero-centered GP on real samples only with $\lambda = 100$, iteration 10,000.
\protect\subref{fig:largeAdam0GPMNIST10k} Our zero-centered GP with $\lambda = 100$, iteration 10,000.
}
\label{fig:mnist10k}
\end{figure}

\begin{figure}[h!]
\centering
\subfloat[] {\includegraphics[width=0.5\textwidth]{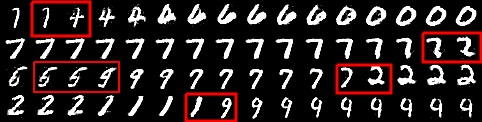} \label{fig:inter0real}} 
\subfloat[] {\includegraphics[width=0.5\textwidth]{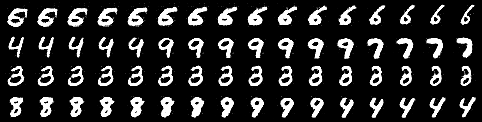} \label{fig:inter0gp}}

\caption{Linear latent space interpolation between two random samples. \protect\subref{fig:inter0real} GAN with 0-GP-sample cannot perform smooth interpolation between modes. Small changes in input latent variable result in big difference in the output (red boxes). The result suggest that 0-GP-sample makes GANs to remember the training dataset and do not generalize to the region between samples in the training dataset. \protect\subref{fig:inter0gp} GAN with our 0-GP can perform smooth interpolation between modes. The behavior implies that GANs with our 0-GP have better generalization.}
\label{fig:interpolation}
\end{figure}

\begin{figure}[h!]
\centering
\subfloat[All categories] {\includegraphics[width=.7\textwidth]{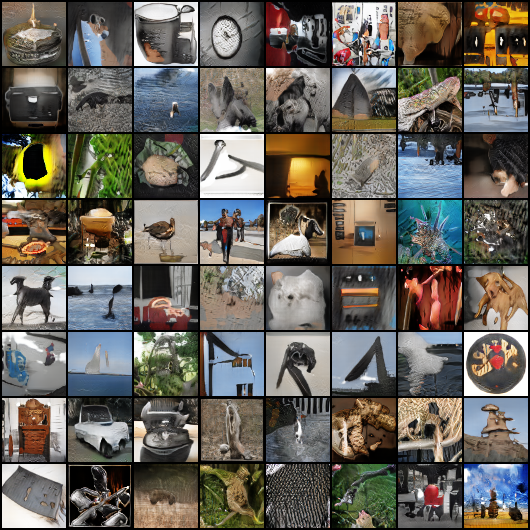}} \\
\subfloat[Welsh springer spaniel] {\includegraphics[width=.7\textwidth]{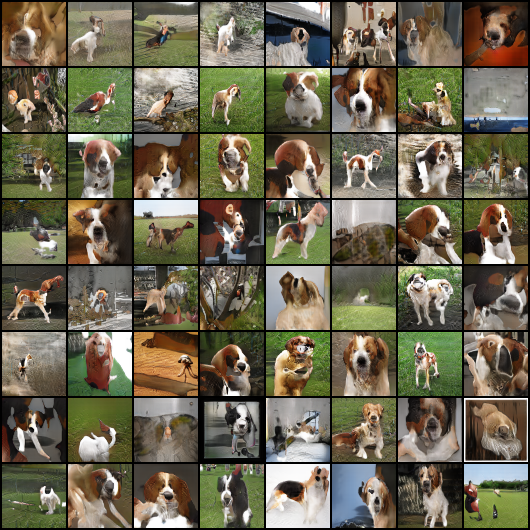}} \\

\caption{Samples from GAN-0-GP trained on ImageNet.}
\end{figure}

\section{Implementation details}
We used Pytorch \citep{pytorch} for development.

\subsection{Synthetic and MNIST datasets}
Generator architecture in synthetic and MNIST experiments
\begin{center}
\begin{tabular}{|c|}
\hline
Fully connected layer $2 \times nhidden$ \\
\hline
$\downarrow$ \\
\hline
ReLU \\
\hline
$\downarrow$ \\
\hline
Fully connected layer $nhidden \times nhidden$ \\
\hline
$\downarrow$ \\
\hline
ReLU \\
\hline 
$\downarrow$ \\
\hline
Fully connected layer $nhidden \times nhidden$ \\
\hline
$\downarrow$ \\
\hline
Fully connected layer $nhidden \times 2$ \\
\hline
\end{tabular}
\end{center}

Discriminator architecture in synthetic and MNIST experiments
\begin{center}
\begin{tabular}{|c|}
\hline
Fully connected layer $2 \times nhidden$ \\
\hline
$\downarrow$ \\
\hline
ReLU \\
\hline
$\downarrow$ \\
\hline
Fully connected layer $nhidden \times nhidden$ \\
\hline
$\downarrow$ \\
\hline
ReLU \\
\hline 
$\downarrow$ \\
\hline
Fully connected layer $nhidden \times nhidden$ \\
\hline
$\downarrow$ \\
\hline
Fully connected layer $nhidden \times 1$ \\
\hline
$\downarrow$ \\
\hline
Sigmoid \\
\hline
\end{tabular}
\end{center}

Hyper parameters for synthetic and MNIST experiments
\begin{center}
\begin{tabular}{|l|l|}
\hline
Learning rate & 0.003 for both $G$ and $D$ \\
\hline
Learning rate TTUR & 0.003 for $G$, 0.009 for $D$ \\
\hline
\end{tabular}
\end{center}

\subsection{ImageNet}
The entire ImageNet dataset with all 1000 classes was used in the experiment. Because of our hardware limits, we used images of size $64 \times 64$. We used the code from \cite{whichGANConverge}, available at \url{https://github.com/LMescheder/GAN_stability}, for our experiment. Generator and Discriminator are ResNets, each contains 5 residual blocks. All GANs in our experiment have the same architectures and hyper parameters. The configuration for WGAN-GP5 is as follows.

\begin{verbatim}
generator:
  name: resnet2
  kwargs:
    nfilter: 32
    nfilter_max: 512
    embed_size: 128
discriminator:
  name: resnet2
  kwargs:
    nfilter: 32
    nfilter_max: 512
    embed_size: 128
z_dist:
  type: gauss
  dim: 128
training:
  out_dir: ../output/imagenet_wgangp5_TTUR
  gan_type: wgan
  reg_type: wgangp
  reg_param: 10.
  batch_size: 64
  nworkers: 32
  take_model_average: true
  model_average_beta: 0.999
  model_average_reinit: false
  monitoring: tensorboard
  sample_every: 1000
  sample_nlabels: 20
  inception_every: 10000
  save_every: 900
  backup_every: 100000
  restart_every: -1
  optimizer: adam
  lr_g: 0.0001
  lr_d: 0.0003
  lr_anneal: 1.
  lr_anneal_every: 150000
  d_steps: 5
  equalize_lr: false
\end{verbatim}

\section{Finding a better path between a pair of samples}
\label{appx:pathFinding}
Because the set of real data is unlikely to be convex, a linear interpolation between two datapoints is unlikely to be in the set. For example, the weighted average of two real images is often not a real image. If the generator is good enough then its output is likely to be on the real data manifold. A path $\mathcal{C}$ from $\bm y$ to $\bm x$ can be found by transforming the straight line between the two corresponding latent codes, to the data space using the generator $G$. To get the latent code of real datapoints, we use an encoder $E$ which is trained to map real data to normally distributed latent codes. The pseudo code of the path finding algorithm is shown in Alg. \ref{alg:pathFinding}.

\begin{algorithm}
\caption{Path finding algorithm}
\label{alg:pathFinding}
\KwData{an encoder $E$; a pair of samples $\bm x, \bm y = G(\bm z)$; }
 \KwResult{interpolated datapoint $\tilde{\bm x}$}
 Get the latent code of $\bm x$: $\bm z_x = E(\bm x)$ \\
 Calculate the interpolated latent code: $\tilde{\bm z} = \alpha \bm z_x + (1 - \alpha) \bm z$ \\
 Generate the interpolated datapoint: $\tilde{\bm x} = G(\tilde{\bm z})$
\end{algorithm}
\end{document}